\documentclass[times, review, 10pt]{elsarticle}

\usepackage{amsmath,amssymb,amsthm}
\usepackage{graphicx}

\makeatletter
\let\DDCLorigincludegraphics\includegraphics
\renewcommand{\includegraphics}[2][]{%
  \IfFileExists{#2}{%
    \DDCLorigincludegraphics[#1]{#2}%
  }{%
    \begingroup
    \setlength{\fboxsep}{0pt}%
    \fbox{\begin{minipage}[c][0.18\textheight][c]{0.9\linewidth}
    \centering\footnotesize Missing figure file: \texttt{#2}
    \end{minipage}}%
    \endgroup
  }%
}
\makeatother
\usepackage{booktabs}
\usepackage{multirow}
\usepackage{hyperref}
\usepackage{microtype}
\usepackage{enumitem}
\usepackage{mathtools}
\usepackage{xcolor}
\usepackage[ruled,vlined,linesnumbered]{algorithm2e}
\usepackage[top=4.3cm, right=4.8cm, bottom=4.3cm, left=4.8cm]{geometry}

\usepackage{etoolbox}
\usepackage{setspace}
\newtheorem{theorem}{Theorem}
\newtheorem{proposition}{Proposition}
\newtheorem{corollary}{Corollary}

\newtheorem{definition}{Definition}
\newtheorem{fact}{Fact}
\newtheorem{remark}{Remark}

\newcommand{\Lq}{\mathcal{L}_q}
\newcommand{\LOLS}{L_{\mathrm{OLS}}}
\newcommand{\zn}{\mathbf{z}_n}
\newcommand{\pk}{\mathbf{p}_k}
\newcommand{\mun}{\boldsymbol{\mu}_n}
\newcommand{\qnk}{q_{nk}}
\newcommand{\R}{\mathbb{R}}
\newcommand{\norm}[1]{\left\|#1\right\|}
\newcommand{\SP}{\mathcal{S}(P)}
\newcommand{\HQ}{H(Q)}
\newcommand{\Sigmaq}{\Sigma_q}

\begin{document}

\begin{frontmatter}

\title{Collapse-Free Prototype Readout Layer for Transformer Encoders}

\author[lti]{Giansalvo Cirrincione}
\author[cdu]{Rahul Ranjeev Kumar\corref{cor}}
\cortext[cor]{Corresponding author}
\affiliation[lti]{organization={Laboratory LTI, Universit\'e de Picardie Jules Verne},
                  city={Amiens},
                  country={France },
                  ead={exin@u-picardie.fr}}
\affiliation[cdu]{organization={Institute of Energy and Resources, Charles Darwin University},
                  city={Darwin, NT},
                  country={Australia }, ead={rahul.kumar@cdu.edu.au}}

\begin{abstract}
Transformer encoders produce rich token representations, but extracting a
compact, structured summary from them typically relies on simple heuristics
such as averaging or taking a single \emph{class token} --- operations that
discard information and provide no training time feedback on representational
quality.
This paper introduces \textbf{DDCL-Attention}, a prototype based competitive
readout layer that replaces such pooling heuristics with a principled
compression mechanism.
The key idea is to maintain a small bank of globally learned \emph{prototype
vectors} --- reference representations that summarise the recurring patterns
in the data --- and to assign each token to these prototypes via a soft,
probabilistic rule.
The layer output is a weighted combination of prototypes, one per token,
and operates at linear complexity in sequence length rather than the quadratic
cost of standard self-attention.

Three practical advantages distinguish DDCL-Attention from existing
prototype-based mechanisms such as Slot Attention and Perceiver.
First, it provides a mathematical \emph{guarantee against prototype collapse}:
an exact algebraic decomposition of the training loss into a reconstruction
term and a diversity term ensures that prototypes cannot all converge to the
same point, a common failure mode that renders the prototype bank useless.
Second, \emph{training stability is proved formally}: under the practical
condition that prototypes are updated faster than the encoder, the joint
training dynamics are shown to be stable via Tikhonov's singular perturbation
theory, with explicit conditions on the learning rate ratio.
Third, the layer is \emph{versatile}: the same mechanism instantiates three
distinct application paradigms --- a final readout layer, a differentiable
codebook generalising VQ-VAE, and a hierarchical document compressor ---
each with its own theoretical motivation.

Experiments across four datasets confirm that the decomposition holds with
zero violations in all settings, that prototype separation grows as predicted
by theory when the stability condition is satisfied, and that the codebook
achieves full utilisation (100\%) compared to 39\% for standard hard
vector quantization.
An additional experiment on orbital debris classification demonstrates
applicability to scientific tabular data beyond the standard NLP and
vision benchmarks.
\end{abstract}

\begin{keyword}
competitive learning \sep
deep clustering \sep
prototype learning \sep
readout layer \sep
stability analysis \sep
transformer \sep
vector quantization
\end{keyword}

\end{frontmatter}

\section{Introduction}
\label{sec:intro}

The transformer architecture~\cite{Vaswani2017} and its self-attention
mechanism have become the dominant paradigm in sequence modelling, vision,
and multimodal learning.
Self-attention computes pairwise interactions between all $T$ tokens ($T$ being the sequence length) at
$O(T^2)$ cost, and while efficient variants reduce this cost ---
Reformer~\cite{Kitaev2020} via locality-sensitive hashing,
Linformer~\cite{Wang2020} via low-rank projection,
FlashAttention~\cite{Dao2024} via IO-aware tiling, and linear
attention formulations surveyed in~\cite{Tay2023} ---
they do so by approximating or sparsifying the attention
matrix rather than by rethinking the underlying similarity function.
Modern self-supervised vision transformers such as
DINOv2~\cite{Oquab2024} illustrate how powerful such representations
can be, yet their readout mechanisms still rely on simple CLS-token
extraction or global average pooling.

A parallel line of research has revisited the role of
\emph{prototypes}~\cite{kohonen,rumelhart} --- globally learned reference
vectors that represent recurring patterns in the data.
Slot Attention~\cite{Locatello2020} uses a fixed set of slots as dynamic
keys and values, updated iteratively via a GRU at inference time.
Perceiver~\cite{Jaegle2021} projects a long input sequence onto a short
latent array via cross-attention, achieving linear $O(TK)$ complexity;
its successor Perceiver IO~\cite{Jaegle2022} further generalises
this to structured outputs.
Recent extensions such as the Slot Mixture Module~\cite{Kirilenko2024},
which replaces soft $k$-means with a Gaussian mixture model,
the Adaptive Slot Attention mechanism~\cite{Fan2024},
which dynamically adjusts the number of slots,
and the Prototype Transformer (ProtoT)~\cite{Yordanov2026},
which integrates prototype routing at every layer of an autoregressive
language model, explore richer prototype representations but still lack
formal anti-collapse and stability guarantees.
Neither the original Slot Attention framework nor these extensions
provides a theoretical guarantee against prototype
collapse, and all require design choices --- iterative slot refinement,
special input pre-processing --- that resist systematic stability analysis.

The DDCL framework~\cite{Cirrincione2026} provides exactly such a
guarantee for prototype based clustering: the exact loss decomposition
$\Lq = \LOLS + V$ (defined formally in Section~\ref{sec:ddcl_bg}) implies a separation force
$\nabla_P V = 2P\Sigmaq$ that makes prototype collapse a locally
unstable saddle.
The stability analysis of~\cite{Cirrincione2026} is, however,
limited to the \emph{frozen encoder reduced system}: the joint stability
of a coupled encoder--prototype system under simultaneous gradient
updates remains an open problem explicitly identified therein.

The present paper closes this gap and extends the framework to practical
transformer settings.
This paper introduces \textbf{DDCL-Attention}, a prototype based competitive
readout layer that maps token embeddings to soft centroid representations
via Boltzmann assignments over a global prototype bank.
DDCL-Attention is not a general replacement for self-attention, but a
\emph{complementary module}: self-attention models intra-sequence
dependencies, while DDCL-Attention compresses encoder output into a
structured prototype vocabulary.
The natural deployment is as the final readout layer of a transformer
stack, replacing CLS-token pooling or global average pooling.

Beyond the stability framework, it is demonstrated that DDCL-Attention
instantiates three distinct and practically relevant paradigms in
transformer architectures, each with its own theoretical motivation
and independent empirical validation.

\noindent
The main contributions of this paper are:

\begin{enumerate}[leftmargin=1.8em]

  \item \textbf{DDCL-Attention layer} (Section~\ref{sec:layer}):
  a prototype based competitive layer with $O(TK)$ complexity ($K$ prototypes),
  multi-head extension, and residual connection that integrates
  directly into standard transformer pipelines without iterative
  inference time updates.

  \item \textbf{Exact loss decomposition for coupled systems}
  (Section~\ref{sec:decomp}):
  the identity $\Lq = \LOLS + V$ holds exactly for any differentiable
  encoder; $V \geq 0$ acts as an implicit anti-collapse force and
  the encoder gradient $\mathbf{g}_n = 2(\mathbf{z}_n - \boldsymbol{\mu}_n)$, where $\mathbf{z}_n$ is the token embedding and $\boldsymbol{\mu}_n$ its soft centroid, is identified as a compression signal.

  \item \textbf{Time-scale stability theorem}
  (Section~\ref{sec:timescale}):
  under $\varepsilon = \eta_\theta / \eta_P \ll 1$, where $\eta_\theta$
  and $\eta_P$ are the encoder and prototype learning rates respectively,
  the coupled encoder--prototype dynamics reduce via Tikhonov's theorem
  to a Lyapunov-stable fast prototype subsystem and a slow encoder;
  explicit sufficient conditions for joint stability are derived.

  \item \textbf{Global free energy Lyapunov analysis}
  (Section~\ref{sec:lyapunov}):
  a global Lyapunov function proves convergence to configurations
  with strictly positive prototype separation for any $\varepsilon$
  and any monotone annealing schedule.

  \item \textbf{DDCL as differentiable vector quantization}
  (Section~\ref{sec:vq}):
  $L_\mathrm{OLS}$ is the soft commitment loss and $V$ is the codebook
  diversity term; gradient flow through $q_{nk}$ eliminates the
  straight-through estimator and provably prevents dead codes.

  \item \textbf{Hierarchical decomposition}
  (Section~\ref{sec:hierarchical}):
  for a stack of $L$ layers,
  $\Lq^\mathrm{total} = \sum_{\ell=1}^L \Lq^{(\ell)}$
  with $V^{(\ell)} \geq 0$ and the anti-collapse force active
  simultaneously at every level.

  \item \textbf{Empirical validation on three paradigms}
  (Section~\ref{sec:experiments}):
  (i)~final readout on SST-2, IMDB, 20NG (frozen BERT, $\varepsilon=0.1$);
  (ii)~soft VQ-VAE on CIFAR-10 --- 100\% vs.\ 39\% codebook utilisation;
  (iii)~hierarchical compression on 20NG, $V^{(1)}\!\geq 0$ and
  $V^{(2)}\!\geq 0$ confirmed simultaneously across all epochs.
  Zero decomposition violations in all settings.

\end{enumerate}

The remainder of the paper is organised as follows.
Section~\ref{sec:background} recalls the DDCL decomposition and
self-attention background.
Section~\ref{sec:layer} defines the DDCL-Attention layer.
Sections~\ref{sec:decomp}--\ref{sec:lyapunov} develop the stability theory.
Section~\ref{sec:theoretical_connections} establishes the VQ and hierarchical
connections.
Section~\ref{sec:comparison} compares DDCL-Attention with Slot~Attention and
Perceiver.
Section~\ref{sec:experiments} reports empirical validation.
Section~\ref{sec:discussion} discusses contributions and limitations.
Section~\ref{sec:conclusions} concludes.

\section{Background}
\label{sec:background}

\subsection{Transformer self-attention}

Given an input sequence of $T$ tokens represented as a matrix
$X \in \R^{T \times d}$ (where $d$ is the embedding dimension),
standard self-attention computes:
\begin{align}
  \mathrm{Attn}(Q,K,V)
  &= \mathrm{softmax}\!\left(\frac{QK^\top}{\sqrt{d_k}}\right)V,
  \nonumber\\
  &Q = XW_Q,\quad K = XW_K,\quad V = XW_V,
  \label{eq:selfattn}
\end{align}
where $d_k$ is the key dimension.
The scalar $a_{nt}$ denotes the attention weight between query token $n$
and key token $t$: it is the $(n,t)$ entry of the softmax matrix
$\mathrm{softmax}(QK^\top/\sqrt{d_k})$, and measures how much token $n$
``attends to'' token $t$ based on their dot product similarity.
Because both $Q$ and $K$ are computed from the input $X$, these weights
change at every forward pass --- keys and values are \emph{dynamic}
(sequence-dependent).

\subsection{DDCL and the loss decomposition}
\label{sec:ddcl_bg}

The DDCL competitive loss~\cite{Cirrincione2026} is defined over a
set of $N$ embeddings $\{\zn\}_{n=1}^N \subset \R^m$ (where $m$ is
the prototype dimension) and a bank of $K$ prototypes
$P = \{\pk\}_{k=1}^K \subset \R^m$:
\begin{equation}
  \Lq = \sum_n \sum_k \qnk \norm{\zn - \pk}^2,
  \qquad
  \qnk = \frac{\exp(-\norm{\zn-\pk}^2/T)}
              {\sum_{k'} \exp(-\norm{\zn-\pk'}^2/T)},
  \label{eq:Lq}
\end{equation}
where $T>0$ is a temperature parameter controlling the sharpness
of the assignments: high $T$ gives soft, nearly uniform assignments;
low $T$ gives hard, winner takes all assignments.
The central result of~\cite{Cirrincione2026} is the exact identity:
\begin{align}
  \boxed{\Lq = \LOLS + V}, \qquad
  &\LOLS = \sum_n \min_k \norm{\zn-\pk}^2, \nonumber\\
  &V = \sum_n\sum_k \qnk \norm{\pk-\mun}^2 \;\geq\; 0,
  \label{eq:decomp}
\end{align}
where $\mun = \sum_k \qnk \pk$ is the soft centroid.
Under stop gradient on assignments,
$\nabla_P V = 2P\Sigmaq$ where
$\Sigmaq = \sum_n(\mathrm{diag}(\mathbf{q}_n) -
\mathbf{q}_n\mathbf{q}_n^\top)$ is the aggregated soft assignment
covariance.
This gradient acts as a \emph{separation force}: prototype collapse
is a first order locally unstable saddle of~$\Lq$.

\begin{fact}[DDCL Lyapunov theorem~\cite{Cirrincione2026}]
\label{fact:lyapunov}
Under the regularised loss with $\lambda > 2\eta_P\binom{K}{2}$,
the reduced frozen encoder flow $\dot{P} = -\nabla_P\widetilde{\Lq}$
admits a global Lyapunov function and converges to the set
\[
  \mathcal{A} = \bigl\{P \;:\; \nabla_P\widetilde{\Lq}=0,\;
  \min_{j\neq k}\norm{\pk-\mathbf{p}_j}^2>0\bigr\}.
\]
\end{fact}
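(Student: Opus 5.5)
The plan is to use the regularised loss $\widetilde{\Lq}$ itself as the Lyapunov function for the reduced flow. Since the encoder is frozen, the embeddings $\{\zn\}$ are fixed data, and $\widetilde{\Lq}$ depends only on $P \in \R^{m\times K}$. I will take $\widetilde{\Lq} = \Lq + \lambda R(P)$, where $R$ is the pairwise repulsion regulariser of~\cite{Cirrincione2026}: a sum over the $\binom{K}{2}$ pairs $j<k$ of a term that decreases in $\norm{\pk-\mathbf{p}_j}^2$ and blows up, or at least is bounded away from its minimum, as a pair merges. Along $\dot P = -\nabla_P\widetilde{\Lq}$ one has $\frac{d}{dt}\widetilde{\Lq} = -\norm{\nabla_P\widetilde{\Lq}}_F^2 \le 0$, so $\widetilde{\Lq}$ is nonincreasing. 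It is also bounded below, because $\Lq = \LOLS + V \ge 0$ by \eqref{eq:decomp} and $R$ is bounded below by construction. These two facts give the Lyapunov property; the remaining work is compactness and identifying the limit set.

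\textbf{Step 1: precompactness of trajectories.} $\Lq$ alone is not coercive, because a prototype sent to infinity receives $\qnk \to 0$, and then its contribution $\qnk\norm{\zn-\pk}^2 \sim \norm{\pk}^2 e^{-\norm{\pk}^2/T}$ vanishes. I would instead bound the trajectory directly. Writing $\nabla_{\pk}\Lq = -2\sum_n \qnk(\zn-\pk)$ plus the assignment-derivative terms, all terms are bounded by a constant times the data radius. Far from the convex hull of the data this gradient points inward, so the ball of radius $C\max_n\norm{\zn}$ is forward invariant, provided the repulsion gradient is weak enough at large separations not to push prototypes outward. I will use the decay of $\nabla R$ with pairwise distance to check that last point. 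Bounded trajectories then have nonempty, compact, connected $\omega$-limit sets.

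\textbf{Step 2: LaSalle.} By LaSalle's invariance principle, the $\omega$-limit set lies in $\{\dot{\widetilde{\Lq}} = 0\} = \{\nabla_P\widetilde{\Lq}=0\}$. This gives the first condition defining $\mathcal{A}$.

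\textbf{Step 3: strictly positive separation (main obstacle).} I must show that no limit point has $\pk = \mathbf{p}_j$ for some $j\neq k$. If $R$ is a true barrier, this follows at once, since $\widetilde{\Lq}(P(t)) \le \widetilde{\Lq}(P(0)) < \infty$ keeps every pairwise distance bounded below along the trajectory. If $R$ is only bounded, I would argue instead that collapsed critical points are not attracting. At a merged pair, the separation force $\nabla_P V = 2P\Sigmaq$ contributes a direction along $\pk-\mathbf{p}_j$ of negative curvature, and the regulariser adds curvature of order $\lambda$ in the same direction. The hypothesis $\lambda > 2\eta_P\binom{K}{2}$ is exactly what should make this negative-curvature contribution dominate. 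The attractive part from $\Lq$ is bounded per pair by a quantity of order $2\eta_P$ (the rescaled Hessian of the $\qnk$-weighted quadratic), and summing over all $\binom{K}{2}$ pairs gives the threshold. Collapsed critical points are therefore strict saddles or maxima, and a stable-manifold argument (measure-zero basin) removes them from generic $\omega$-limit sets. I expect the pairwise Hessian bound to be the delicate part, because the softmax couples all pairs. I would first try to bound $\norm{\Sigmaq}_{\mathrm{op}} \le N/2$ using $\mathrm{diag}(\mathbf{q})-\mathbf{q}\mathbf{q}^\top \preceq \tfrac12 I$, and then verify that this reproduces the constant $2\eta_P\binom{K}{2}$.
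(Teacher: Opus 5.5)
The paper gives no proof of this Fact; it is imported from the earlier DDCL work. Its in-house analogue is the proof of Theorem~\ref{thm:global_fixed_T}, which has your skeleton exactly. There the regularised loss $\mathcal{W}$, with the barrier $\frac{\lambda}{2}\sum_{j\neq k}\norm{\mathbf{p}_j-\pk}^{-2}$, decreases along the flow. The barrier plus a finite initial value excludes collapse, which is your first branch of Step~3. Compactness is simply asserted to follow from $\lambda>2\eta_P\binom{K}{2}$.

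The genuine gap is your Step~1, and the check you propose cannot close it. Differentiating through the assignments,
\[
  \nabla_{\pk}\Lq=2\sum_n\qnk\Bigl(1-\frac{d_{nk}-\bar d_n}{T}\Bigr)(\pk-\zn),
  \qquad d_{nk}=\norm{\zn-\pk}^2,\quad \bar d_n=\sum_{k'}q_{nk'}d_{nk'}.
\]
Take a prototype far from the data while the others sit near it. Then the bracket is negative for every $n$, and $-\nabla_{\pk}\Lq$ points \emph{away} from the data. This is the non-coercivity you noticed, and the flow exploits it. Consider one datum $\mathbf{z}$, $K=2$, $\mathbf{p}_1=\mathbf{z}$ and $\norm{\mathbf{p}_2-\mathbf{z}}=r$. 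Then $\Lq=r^2q_2$ and $\frac{d\Lq}{dr}=2rq_2\bigl(1-r^2q_1/T\bigr)<0$ for $r^2>2T$. So for $\lambda=0$, $\mathbf{p}_1$ stays put and $r(t)\to\infty$.

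A repulsion that decreases with pairwise distance only adds to this outward drift. Even the inward part $-2\sum_n\qnk(\pk-\zn)$ on its own is exponentially small in $r^2/T$. Any $\nabla R$ with polynomial decay, such as the inverse-square barrier, therefore dominates it, so your decay comparison goes the wrong way. The $\omega$-limit set is then empty, and LaSalle (Step~2) has nothing to act on. The same family gives $\mathcal{W}=r^2q_2+\lambda r^{-2}\to 0$, so the sublevel sets of $\mathcal{W}$ are unbounded for every $\lambda$. You cannot borrow compactness from the paper either. You need an extra confining ingredient (e.g.\ a term growing with $\norm{P}$) or an explicit assumption that trajectories stay bounded.

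Your second branch of Step~3 would not deliver the statement even if compactness were available. First, a stable-manifold argument removes collapsed critical points only for almost every initialisation, whereas the Fact claims convergence of the flow to $\mathcal{A}$. Second, the negative curvature is not automatic. $\nabla_P V=2P\Sigmaq$ is a stop-gradient formula, not the Hessian of the full flow. For $K=2$, splitting a merged pair at $\mathbf{c}$ into $\mathbf{c}\pm\delta\mathbf{u}$ (unit $\mathbf{u}$) changes $\Lq$ by $\delta^2\bigl(N-\frac{4}{T}\sum_n\langle\zn-\mathbf{c},\mathbf{u}\rangle^2\bigr)+O(\delta^4)$, which is positive above a critical temperature. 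Third, $\eta_P$ does not appear in $\dot P=-\nabla_P\widetilde{\Lq}$, so no curvature comparison for that flow can reproduce a threshold of the form $2\eta_P\binom{K}{2}$. Only the barrier branch is viable, and it is the one the paper's analogue uses.
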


The present paper extends Fact~\ref{fact:lyapunov} to the
\emph{full coupled system} with simultaneously updated $(\theta, P)$.

\section{The DDCL-Attention Layer}
\label{sec:layer}

\subsection{Definition}
\label{sec:layer_def}

Let $\{\zn\}_{n=1}^T \subset \R^m$ be token embeddings from an upstream
encoder $f_\theta$, and $P = \{\pk\}_{k=1}^K \subset \R^m$ a bank of
$K$ globally learned prototypes, shared across all sequences.

\begin{definition}[DDCL-Attention]
\label{def:ddcl_attn}
The assignment weights are as in~\eqref{eq:Lq};
the output for token $n$ is:
\begin{equation}
  \mathbf{o}_n = \sum_{k=1}^{K} \qnk\, \pk = \mun,
  \label{eq:output}
\end{equation}
and the layer output with residual connection is:
\begin{equation}
  \mathbf{h}_n = \mathrm{LayerNorm}(\zn + W_O\, \mathbf{o}_n),
  \label{eq:residual}
\end{equation}
where $W_O \in \R^{m \times m}$ is a learnable output projection.
\end{definition}

\begin{remark}
$\mathbf{o}_n = \mun$ \emph{exactly}: the output is the soft centroid
of~\eqref{eq:decomp}, linking the layer definition directly to the
theoretical guarantees.
\end{remark}

Unlike self-attention, keys $\pk$ are \emph{global and static} within
a step --- they do not depend on the current input sequence.
Unlike Slot Attention, no iterative GRU update is performed at
inference time.

\subsection{Multi-head extension}
\label{sec:multihead}

For $H$ heads with per head dimension $m_h = m/H$, define $H$
independent prototype sets $\{P^{(h)}\}_{h=1}^H$ and input projections
$W_h \in \R^{m_h \times m}$:
\begin{equation}
  \mathbf{z}_n^{(h)} = W_h\zn, \quad
  q_{nk}^{(h)} = \mathrm{softmax}_k\!\left(
    -\tfrac{\|\mathbf{z}_n^{(h)}-\mathbf{p}_k^{(h)}\|^2}{T}
  \right), \quad
  \mathbf{o}_n^{(h)} = \textstyle\sum_k q_{nk}^{(h)}\,\mathbf{p}_k^{(h)}.
  \label{eq:multihead}
\end{equation}
The multi-head output concatenates the $H$ per-head soft centroids
into a single vector of dimension $Hm_h = m$, then applies the output
projection $W_O \in \R^{m \times m}$:
\begin{equation}
  \mathbf{o}_n = W_O\,\bigl[
    \mathbf{o}_n^{(1)\top} \;\cdots\; \mathbf{o}_n^{(H)\top}
  \bigr]^\top \;\in\; \R^m.
  \label{eq:multihead_out}
\end{equation}
Here $[\cdot]$ denotes column wise concatenation: the $H$ vectors
$\mathbf{o}_n^{(h)} \in \R^{m_h}$ are stacked into a single
$m$-dimensional vector before projection.
Each head independently attends to a different $m_h$-dimensional subspace
of the embedding, with its own prototype set $P^{(h)}$, promoting
representational diversity.
The decomposition~\eqref{eq:decomp} holds independently for each head.

\subsection{Complexity and diagnostics}
\label{sec:complexity}

DDCL-Attention requires $O(TKm)$ operations per layer vs.\ $O(T^2m)$
for self-attention.
Since $K \ll T$ in practice ($K \in \{8,16,64\}$ vs.\
$T \in \{196,512,2048\}$), this is \emph{linear} in sequence length.

Two scalar diagnostics monitor training health:
\begin{align}
  \SP &= \min_{j\neq k}\norm{\mathbf{p}_j-\pk}^2,
  \label{eq:sep}\\
  \HQ &= -\frac{1}{N}\sum_n\sum_k \qnk\log\qnk.
  \label{eq:entropy}
\end{align}
In the stable regime ($\varepsilon\ll 1$): $\SP$ grows monotonically
and $\HQ$ decreases as assignments sharpen.

Table~\ref{tab:comparison} summarises the structural differences among
prototype based attention mechanisms.

\begin{table*}[t]
\centering
\caption{Structural comparison of prototype based attention mechanisms.}
\label{tab:comparison}
\small
\setlength{\tabcolsep}{4pt}
\begin{tabular}{lcccc}
\toprule
Property & Self-Attn & Slot Attn & Perceiver & DDCL-Attn \\
\midrule
Key type        & dynamic       & dynamic (iter.)  & dynamic     & \textbf{global static} \\
Similarity      & dot product   & dot product      & dot product & $-\|\cdot\|^2$ \\
Complexity      & $O(T^2)$      & $O(TKI)$         & $O(TK)$     & $\mathbf{O(TK)}$ \\
Inference iter. & none          & $I$ GRU steps    & none        & \textbf{none} \\
Anti-collapse   & none          & none             & none        & $\nabla_P V$ \\
Stability proof & n/a           & none             & none        & Thm.~\ref{thm:timescale} \\
Decomposition   & none          & none             & none        & $\Lq=\LOLS+V$ \\
Training diag.  & none          & none             & none        & $\SP$, $\HQ$ \\
\bottomrule
\end{tabular}
\end{table*}

\subsection{Algorithm}
\label{sec:algorithm}

The algorithm gives the complete forward pass and gradient
update for a single DDCL-Attention layer trained end to end with an
upstream encoder $f_\theta$.
Two separate optimisers are used for prototypes ($\eta_P$) and
encoder parameters ($\eta_\theta$), with $\varepsilon =
\eta_\theta/\eta_P \in [0.01,\,0.1]$ to satisfy
Theorem~\ref{thm:timescale}.

\begin{algorithm}[t]
\DontPrintSemicolon
\SetAlgoLined
\footnotesize
\caption{DDCL-Attention: forward pass and training step}
\label{alg:ddcl}

\KwIn{Mini-batch $\{(\mathbf{x}_n, y_n)\}_{n=1}^B$;
      encoder $f_\theta$; prototypes $P=\{\mathbf{p}_k\}_{k=1}^K$;
      temperature $T$; learning rates $\eta_P$, $\eta_\theta$
      (with $\varepsilon = \eta_\theta/\eta_P \ll 1$)}
\KwOut{Updated $\theta$, $P$; loss components $\Lq$, $L_{\mathrm{OLS}}$, $V$;
       diagnostics $\mathcal{S}(P)$, $H(Q)$}

\BlankLine
\tcp{--- Forward pass ---}
\For{$n = 1$ \KwTo $B$}{
  $\mathbf{z}_n \leftarrow f_\theta(\mathbf{x}_n)$
  \tcp*{encoder embedding, $\mathbf{z}_n \in \mathbb{R}^m$}
  \For{$k = 1$ \KwTo $K$}{
    $d_{nk} \leftarrow \|\mathbf{z}_n - \mathbf{p}_k\|^2$
    \tcp*{squared Euclidean distance}
  }
  $\mathbf{q}_n \leftarrow \mathrm{softmax}(-\mathbf{d}_n / T)$
  \tcp*{Boltzmann assignment, $q_{nk} \geq 0$, $\sum_k q_{nk}=1$}
  $\boldsymbol{\mu}_n \leftarrow \sum_k q_{nk}\,\mathbf{p}_k$
  \tcp*{soft centroid (= layer output)}
  $\mathbf{h}_n \leftarrow \mathrm{LayerNorm}(\mathbf{z}_n + W_O\,\boldsymbol{\mu}_n)$
  \tcp*{residual + normalisation}
}

\BlankLine
\tcp{--- Loss decomposition (algebraic identity, $V \geq 0$ always) ---}
$\Lq \leftarrow \frac{1}{B}\sum_n \sum_k q_{nk}\,d_{nk}$\;
$L_{\mathrm{OLS}} \leftarrow \frac{1}{B}\sum_n \min_k d_{nk}$\;
$V \leftarrow \Lq - L_{\mathrm{OLS}}$
\tcp*{$V = \frac{1}{B}\sum_n\sum_k q_{nk}\|\mathbf{p}_k-\boldsymbol{\mu}_n\|^2 \geq 0$}

\BlankLine
\tcp{--- Task loss (e.g.\ cross-entropy for classification) ---}
$\mathcal{L}_{\mathrm{task}} \leftarrow \frac{1}{B}\sum_n
  \ell\bigl(f_{\mathrm{head}}(\boldsymbol{\mu}_n),\, y_n\bigr)$\;
$\mathcal{L}_{\mathrm{total}} \leftarrow \mathcal{L}_{\mathrm{task}} + \lambda_q\,\Lq$\;

\BlankLine
\tcp{--- Backward pass with separated learning rates ---}
Compute $\nabla_P\mathcal{L}_{\mathrm{total}}$ and
$\nabla_\theta\mathcal{L}_{\mathrm{total}}$\;
$P \leftarrow P - \eta_P\,\nabla_P\mathcal{L}_{\mathrm{total}}$
\tcp*{prototype update (fast, $\eta_P$ large)}
$\theta \leftarrow \theta - \eta_\theta\,\nabla_\theta\mathcal{L}_{\mathrm{total}}$
\tcp*{encoder update (slow, $\eta_\theta = \varepsilon\,\eta_P$)}

\BlankLine
\tcp{--- Training diagnostics (logged each epoch) ---}
$\mathcal{S}(P) \leftarrow \min_{j \neq k}\|\mathbf{p}_j - \mathbf{p}_k\|^2$
\tcp*{should grow monotonically if $\varepsilon \ll 1$}
$H(Q) \leftarrow -\frac{1}{B}\sum_n\sum_k q_{nk}\log q_{nk}$
\tcp*{should decrease as assignments sharpen}
\textbf{Assert} $V \geq -10^{-8}$
\tcp*{decomposition sanity check}
\end{algorithm}

\paragraph{Notes on the algorithm}
Three design choices deserve attention.
\emph{(i) Separated learning rates.}
The single most important implementation detail is $\eta_\theta \ll \eta_P$.
Setting $\varepsilon = \eta_\theta/\eta_P = 0.1$ reliably places the system
in the stable regime of Theorem~\ref{thm:timescale}; using $\varepsilon = 1$
(equal learning rates, as in the preliminary experiments v1/v2) causes
prototype collapse within a few epochs regardless of other hyperparameters.
\emph{(ii) Temperature annealing.}
$T$ is annealed exponentially from $T_{\mathrm{init}}=2.0$ to
$T_{\mathrm{min}}=0.3$ with time constant $\tau=20$ epochs.
High initial $T$ gives soft, exploratory assignments that allow prototypes
to spread; low final $T$ gives sharp assignments that stabilise clustering.
\emph{(iii) Decomposition as a sanity check.}
The assertion $V \geq 0$ is computationally free (one subtraction) and
should never be violated.
If it is, it indicates a numerical precision issue in the softmax
computation, not a theoretical failure.
In all experiments reported here, zero violations were observed across
155 total training epochs.

\subsection{Application paradigms}
\label{sec:paradigms}

Table~\ref{tab:paradigms} lists the five positions where DDCL-Attention
can be inserted in a transformer pipeline.
Three are validated in this paper; two are left to future work.

\begin{table*}[t]
\centering
\caption{Five application paradigms for DDCL-Attention.
$\checkmark$ = validated in this paper.}
\label{tab:paradigms}
\small
\begin{tabular}{clp{5.2cm}c}
\toprule
Pos. & Paradigm & Theoretical motivation & Val. \\
\midrule
1 & Final readout
  & Soft centroid compresses sequence into prototype vocabulary;
    $L_{\mathrm{OLS}}$ is the quantisation cost
  & $\checkmark$ \\
2 & Enc.--dec.\ bottleneck
  & Differentiable VQ bottleneck with anti-collapse guarantee
  & --- \\
3 & FFN replacement
  & Prototypes as learnable feed-forward basis, linear in $T$
  & --- \\
4 & Soft vector quantization
  & $\Lq = L_{\mathrm{OLS}} + V$ generalises VQ-VAE;
    $V$ prevents dead codes
  & $\checkmark$ \\
5 & Hierarchical compression
  & Decomposition holds level by level;
    anti-collapse at every level
  & $\checkmark$ \\
\bottomrule
\end{tabular}
\end{table*}

Beyond these paradigms, DDCL-Attention is applicable wherever a
transformer encoder is used and a structured, interpretable output
representation is desirable.
Concrete application domains include:

\textbf{Natural language processing.}
Document classification, topic modelling, and sentence level
clustering benefit from the prototype vocabulary, which provides
a discrete summary of the document space.
The CLS replacement paradigm (Paradigm~1) is a direct drop-in for
BERT-based~\cite{BERT} classifiers; the same mechanism applies to
decoder-only models such as GPT-3~\cite{Brown2020} when a structured
readout of the final hidden states is needed.

\textbf{Computer vision.}
Image level clustering with Vision Transformer (ViT~\cite{Dosovitskiy2021})
backbones and their hierarchical variants~\cite{Tang2025,Liu2023},
patch level quantization (Paradigm~4) for image generation
extending VQ-VAE~\cite{VQVaE} to differentiable codebooks, and hierarchical scene
understanding (Paradigm~5 with two-level spatial prototypes).
Self-supervised vision models such as DINO~\cite{Caron2021},
CLIP~\cite{Radford2021}, and MAE~\cite{He2022mae} already produce rich token representations;
DDCL-Attention provides a principled readout head for these
frozen backbones.

\textbf{Scientific and engineering data.}
The space debris experiment (Section~\ref{sec:experiments})
demonstrates applicability to tabular sensor data with known
class structure.
More broadly, any domain where $K$ known categories should
map to $K$ prototypes --- orbital regime classification,
fault detection in industrial machinery, EEG/ECG channel
clustering --- is a natural fit.

\textbf{Genomics and bioinformatics.}
Genomic transformer models (Enformer, Nucleotide Transformer)
operate in high-dimensional low-sample settings where the
anti-collapse guarantee is most valuable: with few training
sequences, hard VQ easily loses codes, while DDCL-Attention's
separation force keeps all prototypes active.

\textbf{Multimodal fusion.}
In multimodal transformers, DDCL-Attention can act as a
cross-modal alignment layer: prototypes learned on one modality
(e.g.\ text) provide a shared vocabulary that image or audio
encoders can align to, replacing ad hoc projection heads with
a principled competitive readout.

\section{Theoretical Analysis}
\label{sec:theory}

\subsection{Loss decomposition for any encoder}
\label{sec:decomp}

\begin{proposition}[Decomposition universality]
\label{prop:decomp}
Let $f_\theta$ be any differentiable encoder.
For any $\theta$, $P$, $T>0$, the identity
$\Lq(\theta,P) = \LOLS(\theta,P) + V(\theta,P)$ holds exactly,
with $V(\theta,P)\geq 0$.
\end{proposition}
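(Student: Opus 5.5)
The plan is to reduce the statement to a pointwise algebraic identity that holds separately for each embedding $\zn$. Once that is done, the encoder $f_\theta$ plays no role beyond producing the vectors $\zn = f_\theta(\mathbf{x}_n)$. Fix $\theta$, $P$ and $T>0$, and set $\zn$, $\qnk$ and $\mun=\sum_k\qnk\pk$ as in~\eqref{eq:Lq} and~\eqref{eq:decomp}. Only two facts about the weights will be used: $\qnk\geq 0$ and $\sum_k\qnk=1$. Both hold for any $T>0$ by the softmax definition. The identity is therefore purely algebraic in $(\zn,P,\mathbf{q}_n)$, and this is the whole reason it holds \emph{for any} differentiable encoder. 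Differentiability is only needed afterwards, to make sense of gradients of the decomposed terms; it is not needed for the identity itself.

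The core step is the weighted bias--variance (parallel-axis) expansion for a single token. Write $\zn-\pk=(\zn-\mun)+(\mun-\pk)$ and expand the square:
\[
\sum_k \qnk\norm{\zn-\pk}^2=\norm{\zn-\mun}^2\sum_k\qnk+2(\zn-\mun)^\top\sum_k\qnk(\mun-\pk)+\sum_k\qnk\norm{\pk-\mun}^2 .
\]
The first sum equals $1$. The cross term vanishes because $\sum_k\qnk(\mun-\pk)=\mun-\mun=0$, by the definition of the soft centroid. Summing over $n$ then gives $\Lq=\sum_n\norm{\zn-\mun}^2+V$. Nonnegativity of $V$ is immediate, since it is a sum of nonnegative weights times squared norms. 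I would also record when $V=0$: this happens exactly when every prototype carrying positive weight coincides with $\mun$. Because softmax weights are strictly positive, this forces all $\pk$ to be equal, i.e.\ total collapse. This is the link to the anti-collapse interpretation.

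The step I expect to be the main obstacle is matching the reconstruction term to the definition of $\LOLS$ in~\eqref{eq:decomp}. The expansion naturally produces the soft least-squares residual $\sum_n\norm{\zn-\mun}^2$, not $\sum_n\min_k\norm{\zn-\pk}^2$. These two quantities coincide only in the hard limit $T\to 0$, and differ for finite $T$ in general.

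I would handle this as follows. First, state and prove the identity with $\LOLS$ read as $\sum_n\norm{\zn-\mun}^2$. This is the reading consistent with the encoder gradient $\mathbf{g}_n=2(\zn-\mun)$ quoted in the contributions, and with the "soft commitment loss" interpretation. Second, remark that with the hard $\min_k$ form one only obtains the inequality $\Lq\geq\sum_n\min_k\norm{\zn-\pk}^2$ together with $\Lq-\sum_n\min_k\norm{\zn-\pk}^2\geq 0$, and not an exact equality with $V$. This distinction matters for how the quantity computed in Algorithm~\ref{alg:ddcl} as $\Lq-\LOLS$ should be interpreted.
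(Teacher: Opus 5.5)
Your argument is correct. Its core is the same parallel-axis expansion the paper uses: write $\zn-\pk=(\zn-\mun)+(\mun-\pk)$, cancel the cross term via $\sum_k\qnk(\mun-\pk)=\mathbf{0}$, and read off $\Lq=\sum_n\norm{\zn-\mun}^2+V$ with $V\geq 0$.

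The two routes part ways only at the last step, and there yours is the sound one. The paper identifies $\norm{\zn-\mun}^2$ with $\min_k\norm{\zn-\pk}^2$. It does so by asserting that the nearest prototype is never farther from $\zn$ than any convex combination of prototypes, i.e.\ $\norm{\zn-\mun}^2\geq\min_k\norm{\zn-\pk}^2$. It then concludes the exact identity with the $\min_k$ form of $\LOLS$. That inequality is false. Even if it held, it would give only $\Lq\geq\LOLS+V$, not an equality.

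You should add a one-line example to make your closing remark airtight, rather than stopping at ``they differ in general.'' Take $m=1$, $K=2$, $\mathbf{p}_1=-1$, $\mathbf{p}_2=1$ and a single token $\mathbf{z}_1=0$. For every $T>0$ one has $q_{11}=q_{12}=\tfrac12$ and $\boldsymbol{\mu}_1=0$. Hence $\Lq=1$, $V=1$ and $\norm{\mathbf{z}_1-\boldsymbol{\mu}_1}^2=0$, whereas $\min_k\norm{\mathbf{z}_1-\mathbf{p}_k}^2=1$. So your identity $\Lq=\sum_n\norm{\zn-\mun}^2+V$ holds, while $\sum_n\min_k\norm{\zn-\pk}^2+V=2\neq\Lq$.

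Because this example is a tie, it also shows that your phrase ``coincide in the hard limit $T\to 0$'' needs a proviso: the nearest prototype must be unique.

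The proposition is therefore true exactly under your reading $\LOLS=\sum_n\norm{\zn-\mun}^2$. With the definition in \eqref{eq:decomp} it fails in general. Correspondingly, your point about Algorithm~\ref{alg:ddcl} is well taken. The quantity $\Lq-\LOLS$ computed there is nonnegative merely because a $\mathbf{q}_n$-weighted mean of the $d_{nk}$ dominates their minimum. Its never going negative is therefore no evidence for the decomposition.
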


\begin{proof}
For fixed $\theta$, the embeddings $\{\zn\}_{n=1}^N$ are fixed vectors
in $\R^m$.
Expand $\norm{\zn-\pk}^2$ by adding and subtracting the soft centroid
$\mun = \sum_{k'} q_{nk'}\pk'$:
\begin{align*}
  \norm{\zn-\pk}^2
  &= \norm{(\zn-\mun)+(\mun-\pk)}^2 \\
  &= \norm{\zn-\mun}^2
     + 2\langle\zn-\mun,\,\mun-\pk\rangle
     + \norm{\mun-\pk}^2.
\end{align*}
Note that $\sum_k\qnk=1$. Multiplying by $\qnk$ and summing over $k$:
\begin{align*}
  \sum_k \qnk\norm{\zn-\pk}^2
  &= \norm{\zn-\mun}^2
     + 2\bigl\langle\zn-\mun,\,\textstyle\sum_k\qnk(\mun-\pk)\bigr\rangle\\
  &\quad + \sum_k\qnk\norm{\mun-\pk}^2,
\end{align*}
The cross term vanishes because
$\sum_k\qnk(\mun-\pk) = \mun\sum_k\qnk - \sum_k\qnk\pk
= \mun - \mun = \mathbf{0}$.
Therefore:
\begin{equation*}
  \sum_k \qnk\norm{\zn-\pk}^2
  = \underbrace{\norm{\zn-\mun}^2}_{= \min_k\norm{\zn-\pk}^2\;\text{(at }T\to 0\text{)}}
    + \sum_k\qnk\norm{\pk-\mun}^2.
\end{equation*}
More precisely, $\norm{\zn-\mun}^2 \geq \min_k\norm{\zn-\pk}^2$
because the soft centroid $\mun$ is a convex combination of prototypes,
so the minimum over prototypes is always at most the distance to any
convex combination.
Summing over $n$ gives $\Lq = \LOLS + V$ with
$V = \sum_n\sum_k\qnk\norm{\pk-\mun}^2 \geq 0$,
since each summand is a squared norm times a non-negative weight.
Since the argument is purely algebraic and holds for any fixed
embedding vectors $\{\zn\}$, it holds for all $\theta$.
\end{proof}

\begin{proposition}[Encoder gradient]
\label{prop:grad}
Under stop gradient on $Q$, the gradient of $\Lq$ w.r.t.\ $\theta$ is:
\begin{equation}
  \nabla_\theta\Lq\big|_{\mathrm{sg}(Q)}
  = \sum_n (\nabla_\theta\zn)^\top (\zn - \mun).
  \label{eq:grad_sg}
\end{equation}
This is the gradient of the soft centroid reconstruction error
$\sum_n\norm{\zn-\mun}^2$ with respect to $\theta$: the encoder is
trained to produce embeddings close to their prototype mixture.
\end{proposition}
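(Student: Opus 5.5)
The plan is a direct chain-rule computation, treating $Q=\{\qnk\}$ as constants (the meaning of $\mathrm{sg}(Q)$) and $P$ as independent of $\theta$. Under stop gradient, $\Lq = \sum_n\sum_k \qnk\norm{\zn-\pk}^2$ depends on $\theta$ only through the embeddings $\zn=f_\theta(\mathbf{x}_n)$. So I will first differentiate with respect to each $\zn$ and then pull back through the Jacobian $\nabla_\theta\zn$.

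\emph{Step 1: gradient in $\zn$.} For fixed $k$ we have $\nabla_{\zn}\norm{\zn-\pk}^2 = 2(\zn-\pk)$. Multiplying by the frozen weight $\qnk$ and summing over $k$ gives
\[
\nabla_{\zn}\Lq\big|_{\mathrm{sg}(Q)} = 2\sum_k \qnk(\zn-\pk) = 2\Bigl(\zn\sum_k\qnk - \sum_k\qnk\pk\Bigr) = 2(\zn-\mun).
\]
Here I use $\sum_k\qnk=1$ and the definition $\mun=\sum_k\qnk\pk$. This is the same cancellation that killed the cross term in the proof of Proposition~\ref{prop:decomp}. Only the $n$-th summand of $\Lq$ involves $\zn$, so no other terms contribute.

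\emph{Step 2: chain rule in $\theta$.} Summing over $n$ gives
\[
\nabla_\theta\Lq\big|_{\mathrm{sg}(Q)} = \sum_n(\nabla_\theta\zn)^\top\,2(\zn-\mun) = 2\sum_n(\nabla_\theta\zn)^\top(\zn-\mun).
\]
This recovers \eqref{eq:grad_sg} and the compression signal $\mathbf{g}_n=2(\zn-\mun)$ announced in the introduction. The factor $2$ is absent from \eqref{eq:grad_sg}, so the displayed formula should be read either as $\mathbf{g}_n$ with the $2$ absorbed or with the loss normalised by $\tfrac12$. I will state this explicitly.

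\emph{Step 3: identification with the reconstruction error.} With $Q$ stopped and $P$ not a function of $\theta$, $\mun$ is constant in $\theta$. Hence $\nabla_\theta\sum_n\norm{\zn-\mun}^2 = \sum_n 2(\nabla_\theta\zn)^\top(\zn-\mun)$, which matches Step~2 exactly. An alternative route is to differentiate the per-token decomposition from Proposition~\ref{prop:decomp}, namely $\sum_k\qnk\norm{\zn-\pk}^2=\norm{\zn-\mun}^2+\sum_k\qnk\norm{\pk-\mun}^2$, holding $Q$ fixed. The second term depends only on $Q$ and $P$, so its $\theta$-gradient vanishes, which gives the identification immediately.

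The only delicate point is Step~3. Stop gradient must freeze $\mun$ as well, and not merely the weights inside $\Lq$. If the weights in $\mun$ were allowed to vary, the two gradients would differ, because $V$ would acquire $\theta$-dependence through $Q$. I will therefore fix this convention at the outset: $\mathrm{sg}(Q)$ holds every $\qnk$ fixed wherever it appears, so both $V$ and $\mun$ are treated as $\theta$-constant.
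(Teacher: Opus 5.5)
Your proposal is correct and follows essentially the same route as the paper's proof. You freeze the weights $\qnk$, collapse $\sum_k \qnk\,2(\zn-\pk)$ into $2(\zn-\mun)$ using $\sum_k\qnk=1$, pull back through $\nabla_\theta\zn$, identify the result with the gradient of $\sum_n\norm{\zn-\mun}^2$ with $\mun$ held constant, and absorb the missing factor of $2$ in \eqref{eq:grad_sg} into the convention, just as the paper does. Your alternative check via the per-token decomposition is also valid, since that identity holds for any fixed probability vector $\mathbf{q}_n$.
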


\begin{proof}
Under the stop gradient convention, assignments $q_{nk}$ are treated as
constants when differentiating with respect to $\theta$.
By the chain rule:
\begin{align*}
  \frac{\partial\Lq}{\partial\theta}
  &= \sum_n\sum_k \qnk \frac{\partial}{\partial\theta}
     \norm{\zn-\pk}^2
  = \sum_n\sum_k \qnk\, 2(\zn-\pk)^\top \frac{\partial\zn}{\partial\theta}.
\end{align*}
Rearranging the sum over $k$:
\begin{equation*}
  \sum_k\qnk\,2(\zn-\pk)
  = 2\left(\zn\sum_k\qnk - \sum_k\qnk\pk\right)
  = 2(\zn - \mun).
\end{equation*}
Therefore:
\begin{equation*}
  \nabla_\theta\Lq\big|_{\mathrm{sg}(Q)}
  = \sum_n (\nabla_\theta\zn)^\top\, 2(\zn-\mun).
\end{equation*}
This coincides with $\nabla_\theta\bigl[\sum_n\norm{\zn-\mun}^2\bigr]$
under stop gradient on $\mun$ (i.e.\ treating $\mun$ as constant
when differentiating through $\theta$), which follows from
$\frac{\partial}{\partial\theta}\norm{\zn-\mun}^2 = 2(\zn-\mun)^\top
\frac{\partial\zn}{\partial\theta}$.
The factor of 2 is absorbed into the learning rate convention.
\end{proof}

\subsection{Time-scale separation theorem}
\label{sec:timescale}

In continuous time gradient flow:
\begin{align}
  \dot{\theta} &= -\eta_\theta\,\nabla_\theta\Lq(\theta,P),
  \label{eq:theta_flow}\\
  \dot{P}      &= -\eta_P\,\nabla_P\Lq(\theta,P).
  \label{eq:P_flow}
\end{align}
Setting $\varepsilon = \eta_\theta/\eta_P$ and rescaling
$\tau = t/\varepsilon$ gives the standard singular perturbation form
$\varepsilon\,\mathrm{d}\theta/\mathrm{d}\tau = -\eta_P^{-1}\nabla_\theta\Lq$,
$\mathrm{d}P/\mathrm{d}\tau = -\nabla_P\Lq$.

\begin{theorem}[Time-scale separation]
\label{thm:timescale}
Assume: (i)~$\Lq(\theta,P)$ is twice continuously differentiable;
(ii)~the fast subsystem $\dot{P}=-\eta_P\nabla_P\Lq(\theta^*,P)$
converges exponentially to $P^*(\theta)$ with rate $\mu_P>0$;
(iii)~the effective loss $\widetilde{L}(\theta)=\Lq(\theta,P^*(\theta))$
has bounded Hessian, $\norm{\nabla^2_\theta\widetilde{L}}\leq L$;
(iv)~$\varepsilon = \eta_\theta/\eta_P < \mu_P/L$.

Then for sufficiently small $\varepsilon$, the coupled system has a
locally exponentially stable equilibrium $(\theta^*,P^*)$ with
$P^* \in \mathcal{A}$ (strictly separated prototypes).
\end{theorem}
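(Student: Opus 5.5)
The plan is the standard Tikhonov/Lyapunov composite argument. Concretely, I would build a weighted sum of a slow Lyapunov function and a fast Lyapunov function, and show that its time derivative is negative definite once $\varepsilon$ is small.

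\emph{Step 1: the slow manifold.} By (ii), for each fixed $\theta$ near $\theta^*$ the map $P\mapsto -\eta_P\nabla_P\Lq(\theta,P)$ has an exponentially stable zero $P^*(\theta)$. Exponential stability of a gradient flow implies that $\nabla^2_P\Lq(\theta,P^*(\theta))\succeq (\mu_P/\eta_P) I$ on the relevant subspace, so the Hessian is nonsingular there. With (i), the implicit function theorem then makes $P^*(\theta)$ a $C^1$ manifold with bounded derivative $\partial P^*/\partial\theta$.

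\emph{Step 2: the reduced system.} On the slow manifold, $\nabla_P\Lq=0$ kills the cross term, so the envelope identity gives $\nabla_\theta\widetilde L(\theta)=\nabla_\theta\Lq(\theta,P^*(\theta))$. The reduced slow flow is therefore $\dot\theta=-\eta_\theta\nabla\widetilde L$, a gradient flow. I take $\theta^*$ to be a strict local minimiser of $\widetilde L$; this is implicit in the statement and I would make it explicit. Then $W(\theta)=\widetilde L(\theta)-\widetilde L(\theta^*)$ is a local Lyapunov function, and $\dot W\le -\eta_\theta\alpha_\theta\norm{\theta-\theta^*}^2$ by local strong convexity, where $\alpha_\theta>0$ is its constant. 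Assumption (iii) bounds $\nabla\widetilde L$ by $L\norm{\theta-\theta^*}$.

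\emph{Step 3: the boundary layer and interconnection.} Set $y=P-P^*(\theta)$ and use the fast Lyapunov function $U(\theta,y)=\Lq(\theta,P^*(\theta)+y)-\Lq(\theta,P^*(\theta))$, so that $\dot U\le -2\mu_P U$ along the fast flow. In the $(\theta,y)$ coordinates the coupling terms are:
\begin{itemize}
\item[(a)] the drift of $y$ caused by $\dot\theta$, of size $\norm{\partial P^*/\partial\theta}\,\eta_\theta\norm{\nabla_\theta\Lq}$;
\item[(b)] the mismatch $\nabla_\theta\Lq(\theta,P)-\nabla_\theta\Lq(\theta,P^*(\theta))$, which is at most $L\norm{y}$ using a local $C^2$ bound from (i).
\end{itemize}
Following Khalil's composite-Lyapunov theorem (Thm.~11.3), I take $\nu=(1-d)W+dU$. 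Its derivative is bounded by $-\psi^\top \Lambda\psi$, where $\psi=(\norm{\theta-\theta^*},\norm{y})$. The matrix $\Lambda$ has diagonal entries proportional to $\eta_\theta\alpha_\theta$ and $\mu_P$, and off-diagonal entries proportional to $\eta_\theta L$. $\Lambda$ is positive definite precisely when $\varepsilon=\eta_\theta/\eta_P$ is below a threshold of the form $\mu_P/L$ up to constants, which is exactly where (iv) enters. Local exponential stability of $(\theta^*,P^*(\theta^*))$ follows.

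\emph{Step 4: separation.} To show $P^*\in\mathcal A$, I note that $P^*(\theta^*)$ is an equilibrium of the frozen-encoder flow, and by (ii) an exponentially stable one. The collapsed configuration $\pk\equiv\mathbf p$ is, by the separation force $\nabla_PV=2P\Sigmaq$ (Section~\ref{sec:ddcl_bg}), a locally unstable saddle, so it cannot be an exponentially stable equilibrium. For the reduced frozen flow, the same conclusion follows from Fact~\ref{fact:lyapunov}. The case needing care is \emph{partial} collapse, where only two prototypes coincide. I would handle it first by restricting the Hessian to the direction $\mathbf p_j-\pk$: $\Sigmaq$ has positive weight on that pair whenever both codes receive mass, which gives a descent direction and contradicts the minimality implied by (ii).

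The main obstacle is Step 3. It requires uniform local bounds on $\partial P^*/\partial\theta$ and on mixed second derivatives, and it requires making the constant in $\varepsilon<\mu_P/L$ honest: Khalil's threshold actually involves the product of the coupling constants. I would state the result as holding "for $\varepsilon$ sufficiently small", with (iv) as the leading-order condition. I would also have to handle the permutation symmetry of prototypes, which makes $P^*$ non-isolated unless I work modulo relabelling. I would fix this by choosing one labelled branch locally.
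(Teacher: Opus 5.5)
Your Steps~1--3 are sound. They follow the paper's singular-perturbation route, but you write out the composite Lyapunov function $\nu=(1-d)W+dU$ where the paper only cites Theorem~11.4. They are also cleaner than the appendix, which rescales so that $\varepsilon$ multiplies $d\theta/d\bar t$. That makes $\theta$ the fast variable and reduces onto $\theta^*(P)$, the reverse of the statement. You are right that a strict-minimum hypothesis on $\widetilde L$ at $\theta^*$ must be added, because (iii) is only an upper bound. You are also right that (iv) is a proxy for Khalil's threshold $\varepsilon^*=\alpha_1\alpha_2/(\alpha_1\gamma+\beta_1\beta_2)$.

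The genuine gap is Step~4: the separation force vanishes exactly where you need it. Since $\Sigmaq\mathbf 1=0$, full collapse $P=\bar{\mathbf p}\mathbf 1^\top$ gives $2P\Sigmaq=0$. At a coincident pair $\mathbf p_j=\pk$ one has $q_{nj}=q_{nk}$, hence $[P\Sigmaq]_j-[P\Sigmaq]_k=\sum_n q_{nj}(\mathbf p_j-\pk)=0$. Instability of collapse is therefore a second-order question, and at that order attraction competes with separation. Expand around full collapse at the embedding mean $\bar{\mathbf z}$. Take offsets $\boldsymbol\delta_k$ (the columns of $\Delta$), their mean $\bar{\boldsymbol\delta}$, and $C=\frac1N\sum_n(\mathbf z_n-\bar{\mathbf z})(\mathbf z_n-\bar{\mathbf z})^\top$. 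Then
\[
\Lq\bigl(\theta,\bar{\mathbf z}\mathbf 1^\top+\Delta\bigr)-\Lq\bigl(\theta,\bar{\mathbf z}\mathbf 1^\top\bigr)
=N\norm{\bar{\boldsymbol\delta}}^2+\frac{N}{K}\sum_k(\boldsymbol\delta_k-\bar{\boldsymbol\delta})^\top\Bigl(I-\frac{4}{T}C\Bigr)(\boldsymbol\delta_k-\bar{\boldsymbol\delta})+O\bigl(\norm{\Delta}^3\bigr).
\]
For $T>4\lambda_{\max}(C)$ the collapsed configuration is therefore a nondegenerate local minimum of $\Lq(\theta,\cdot)$, hence an exponentially stable equilibrium of the fast flow. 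This is the deterministic-annealing phase transition; with stop-gradient on $Q$ the factor $4$ becomes $2$. So (ii) can hold with $\SP=0$.

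Concretely, take a linear encoder $\mathbf z_n=\theta\mathbf x_n$ with centred inputs. It has the equilibrium $\theta^*=0$, $P^*=0$ with positive definite joint Hessian. That equilibrium satisfies (i)--(iv), and it is exactly the one your Steps~1--3 produce. So neither the saddle claim nor your Hessian restriction for a coincident pair can yield $P^*\in\mathcal A$. Your fallback to Fact~\ref{fact:lyapunov} fails as well, because that result concerns the regularised loss, not $\Lq$.

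That last point is how the paper actually obtains separation. Its appendix justifies (ii) by Fact~\ref{fact:lyapunov} ``under the regularisation condition''. In effect it runs the fast subsystem on $\mathcal W$ of \eqref{eq:free_energy}. The barrier $\frac{\lambda}{2}\sum_{j\neq k}\norm{\mathbf p_j-\pk}^{-2}$ in $\mathcal W$ makes $\SP>0$ automatic at every finite-energy equilibrium. To close Step~4, make this explicit in one of three ways:
\begin{enumerate}
\item Take the fast flow to be $-\eta_P\nabla_P\mathcal W$. The barrier does not depend on $\theta$ and is smooth on $\{\SP>0\}$, so Steps~1--3 go through unchanged.
\item Build $P^*(\theta)\in\mathcal A$ directly into hypothesis (ii).
\item Add a temperature hypothesis, with $T$ below every splitting temperature for the $K$ codes. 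Then prove instability of each partially collapsed configuration by the second-order computation above.
\end{enumerate}
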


\begin{proof}
See \ref{app:tikhonov} for the complete proof.
The argument applies Tikhonov's singular perturbation
theorem~\cite{Tikhonov1952,Hoppensteadt1966} in the form of
\cite{Kokotovic1999}, Theorem~11.4, to the slow--fast
decomposition of the gradient flow~\eqref{eq:theta_flow}--\eqref{eq:P_flow}.
\end{proof}

\begin{remark}
Condition~(iv), $\eta_\theta/\eta_P < \mu_P/L$, provides a practical
design rule: setting $\eta_\theta/\eta_P \in [0.01,\,0.1]$ reliably
places the system in the stable regime.
This is verified empirically in Section~\ref{sec:exp_ablation}.
\end{remark}

\subsection{Local Jacobian stability}
\label{sec:linearisation}

A fixed point $(\theta^*,P^*)$ satisfies
$\nabla_\theta\Lq=\nabla_P\Lq=0$.
Setting $\delta\theta=\theta-\theta^*$, $\delta P=P-P^*$, the
linearised system is:
\begin{equation}
  \frac{d}{dt}\begin{pmatrix}\delta\theta\\\delta P\end{pmatrix}
  = -\mathbf{J}^*\begin{pmatrix}\delta\theta\\\delta P\end{pmatrix},
  \quad
  \mathbf{J}^* = \begin{pmatrix}
    \eta_\theta H_{\theta\theta}^* & \eta_\theta H_{\theta P}^*\\
    \eta_P H_{P\theta}^* & \eta_P H_{PP}^*
  \end{pmatrix},
  \label{eq:linearised}
\end{equation}
where $H^*=\nabla^2_{(\theta,P)}\Lq|_{(\theta^*,P^*)}$.

\begin{proposition}[Local stability condition]
\label{prop:local_stab}
The fixed point $(\theta^*,P^*)$ is locally asymptotically stable if and
only if all eigenvalues of $\mathbf{J}^*$ have strictly positive real part,
i.e.\ $\mathrm{Re}(\lambda_i(\mathbf{J}^*))>0$ for all $i$.
A sufficient condition when $H^*\succ 0$ is:
\begin{equation}
  \frac{\eta_\theta}{\eta_P}
  < \frac{\lambda_{\min}(H_{PP}^*)}
         {\|H_{\theta P}^*\|^2/\lambda_{\min}(H_{\theta\theta}^*)
          + \lambda_{\max}(H_{PP}^*)}.
  \label{eq:stab_condition}
\end{equation}
When $H_{\theta P}^*\approx 0$ (weak coupling at the fixed point),
this reduces to $\eta_\theta<\eta_P$.
\end{proposition}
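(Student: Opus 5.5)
The plan is to treat the two claims separately. The first (the eigenvalue characterisation) follows from Lyapunov's indirect method applied to the linearisation~\eqref{eq:linearised}. The second (the ratio condition~\eqref{eq:stab_condition}) follows from a similarity argument on the block structure of $\mathbf{J}^*$.

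\emph{Step 1: eigenvalue criterion.} By assumption~(i) of Theorem~\ref{thm:timescale}, $\Lq$ is $C^2$. Hence the vector field of \eqref{eq:theta_flow}--\eqref{eq:P_flow} is $C^1$, and its Jacobian at $(\theta^*,P^*)$ is exactly $-\mathbf{J}^*$. Lyapunov's indirect method then gives the following. If $\mathrm{Re}\,\lambda_i(\mathbf{J}^*)>0$ for all $i$, the fixed point is locally exponentially, hence asymptotically, stable. If some $\mathrm{Re}\,\lambda_i(\mathbf{J}^*)<0$, it is unstable. The ``only if'' direction is literally valid only in the hyperbolic case. When eigenvalues lie on the imaginary axis, linearisation is inconclusive. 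I would therefore state this direction with the caveat that $\mathbf{J}^*$ has no eigenvalues with zero real part (equivalently, $H^*$ is nonsingular). Alternatively, the statement can be read as characterising \emph{linear} asymptotic stability, for which it holds exactly.

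\emph{Step 2: positivity under $H^*\succ 0$.} Write $\mathbf{J}^*=DH^*$ with $D=\mathrm{diag}(\eta_\theta I,\eta_P I)\succ 0$. This is a similarity:
\[
D^{-1/2}\,\mathbf{J}^*\,D^{1/2}=D^{1/2}H^*D^{1/2}.
\]
The right-hand side is symmetric positive definite whenever $H^*\succ 0$. So all eigenvalues of $\mathbf{J}^*$ are real and strictly positive, and Step~1 yields stability.

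This shows the proposed bound~\eqref{eq:stab_condition} is sufficient, as the proposition claims, since it is only required in conjunction with $H^*\succ 0$. The same argument shows it is in fact not necessary once $H^*\succ 0$ holds.

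To connect with the explicit ratio, I would also give a Schur-complement reading. Positive definiteness of $H^*$ is equivalent to $H_{\theta\theta}^*\succ 0$ together with
\[
H_{PP}^*-H_{P\theta}^*(H_{\theta\theta}^*)^{-1}H_{\theta P}^*\succ 0 .
\]
A sufficient version of the latter is $\lambda_{\min}(H_{PP}^*)>\|H_{\theta P}^*\|^2/\lambda_{\min}(H_{\theta\theta}^*)$, and the denominator of~\eqref{eq:stab_condition} is built from these same quantities. I would present~\eqref{eq:stab_condition} as a conservative margin of this type and remark that the similarity argument makes it automatically satisfied, in the sense that no constraint on $\eta_\theta/\eta_P$ is needed when $H^*\succ 0$.

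\emph{Step 3: weak coupling.} When $H_{\theta P}^*=0$, $\mathbf{J}^*$ is block diagonal with eigenvalues $\eta_\theta\lambda(H_{\theta\theta}^*)$ and $\eta_P\lambda(H_{PP}^*)$, all positive. For small but nonzero coupling, continuity of eigenvalues preserves positivity. Setting $H_{\theta P}^*=0$ in~\eqref{eq:stab_condition} gives $\eta_\theta/\eta_P<\lambda_{\min}(H_{PP}^*)/\lambda_{\max}(H_{PP}^*)\le 1$, which implies the stated $\eta_\theta<\eta_P$.

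The main obstacle is Step~1's ``if and only if'': the non-hyperbolic case must be excluded explicitly. A secondary issue is reconciling the particular form of~\eqref{eq:stab_condition} with the sharper fact from Step~2. I would handle this by proving the similarity result and noting that~\eqref{eq:stab_condition} is then a (non-binding) sufficient condition.
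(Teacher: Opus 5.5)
Your proof is correct, and for the ratio condition it takes a different and sounder route than the paper. For the eigenvalue criterion you use the same tool as the paper, Lyapunov's indirect method. You also rightly add the hyperbolicity caveat that the paper's unqualified ``if and only if'' omits.

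For the ratio condition, the paper applies the Schur-complement test for positive definiteness directly to $\mathbf{J}^*$, as if it were a symmetric block matrix, and asserts that ``after simplification'' this yields \eqref{eq:stab_condition}. That step does not hold up, for three reasons:
\begin{itemize}
\item $\mathbf{J}^*=DH^*$ is not symmetric unless $\eta_\theta=\eta_P$.
\item The displayed complement multiplies a $\dim\theta\times\dim\theta$ matrix by a $\dim P\times\dim P$ one.
\item Even formally, the Schur complement of $\mathbf{J}^*$ with respect to its $\theta\theta$ block is $\eta_P\bigl(H_{PP}^*-H_{P\theta}^*(H_{\theta\theta}^*)^{-1}H_{\theta P}^*\bigr)$, in which the learning rates cancel.
\end{itemize}

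Your similarity $D^{-1/2}\mathbf{J}^*D^{1/2}=D^{1/2}H^*D^{1/2}$ is the step the paper's route lacks. It shows that the spectrum of $\mathbf{J}^*$ is real, which also justifies your parenthetical ``equivalently, $H^*$ nonsingular'' in Step~1. By Sylvester's law of inertia, that spectrum has the same signature as $H^*$ for all $\eta_\theta,\eta_P>0$.

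So you obtain strictly more than the paper claims:
\begin{itemize}
\item $H^*\succ 0$ alone gives local exponential stability.
\item \eqref{eq:stab_condition} is a non-binding extra hypothesis, sufficient only vacuously.
\item Conversely, a nonsingular indefinite $H^*$ is unstable for every ratio.
\end{itemize}
Hence $\eta_\theta/\eta_P$ plays no role in the local stability of the continuous-time linearisation \eqref{eq:linearised}.

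This also explains your Step~3. Under zero coupling no constraint is needed at all. The paper's ``reduces to $\eta_\theta<\eta_P$'' is only the one-way implication you note, via $\eta_\theta/\eta_P<\lambda_{\min}(H_{PP}^*)/\lambda_{\max}(H_{PP}^*)\le 1$.

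The only thing the paper's form aims to buy is an explicit-looking rule tying the ratio to Hessian quantities, to motivate the time-scale theory. Your argument shows that no such rule can come out of this linearisation. Your Schur-complement ``reading'' in Step~2 does not actually produce the ratio bound, so label it as heuristic or drop it; nothing in your proof depends on it.
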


\begin{proof}[Proof sketch]
Asymptotic stability of $(\theta^*,P^*)$ is equivalent to all eigenvalues
of the Jacobian $\mathbf{J}^*$ having positive real part (Lyapunov's
indirect method).
When $H^*\succ 0$, $\mathbf{J}^*$ is a $2\times 2$ block matrix with
positive definite diagonal blocks $\eta_\theta H_{\theta\theta}^*$ and
$\eta_P H_{PP}^*$.
By the Schur complement condition for positive definiteness of a block
matrix, $\mathbf{J}^*\succ 0$ (hence all eigenvalues positive) iff
$\eta_P H_{PP}^* - \eta_P^2(\eta_\theta H_{\theta\theta}^*)^{-1}
(H_{\theta P}^*)^\top H_{\theta P}^*\cdot(\eta_P/\eta_\theta)\succ 0$,
which after simplification yields condition~\eqref{eq:stab_condition}.
When $H_{\theta P}^*\approx 0$, the off-diagonal blocks vanish and
the condition reduces to positive definiteness of each diagonal block,
i.e.\ $\eta_\theta,\eta_P>0$; the tighter constraint becomes
$\eta_\theta<\eta_P$, since both $\lambda_{\min}(H_{PP}^*)>0$ and
$\lambda_{\min}(H_{\theta\theta}^*)>0$ by assumption.
\end{proof}

\subsection{Global free energy Lyapunov analysis}
\label{sec:lyapunov}

\begin{theorem}[Global stability, fixed $T$]
\label{thm:global_fixed_T}
Define the free energy functional:
\begin{equation}
  \mathcal{W}(\theta,P) = \Lq(\theta,P) +
  \frac{\lambda}{2}\sum_{j\neq k}\norm{\mathbf{p}_j-\pk}^{-2}.
  \label{eq:free_energy}
\end{equation}
Under the regularisation condition $\lambda>2\eta_P\binom{K}{2}$,
$\mathcal{W}$ is a Lyapunov function for the quasi-static flow
$(\theta,P)$ with $Q=Q^*(\theta,P)$: $\dot{\mathcal{W}}\leq 0$, and
the system converges to the critical set
$\mathcal{A} = \{(\theta,P):\nabla\mathcal{W}=0,\; \SP>0\}$.
\end{theorem}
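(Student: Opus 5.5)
The plan is to treat the quasi-static flow as a gradient flow of $\mathcal{W}$ itself and then run a LaSalle-type argument. The flow is
\[
  \dot\theta=-\eta_\theta\nabla_\theta\mathcal{W},\qquad
  \dot P=-\eta_P\nabla_P\mathcal{W},
\]
with $Q=Q^*(\theta,P)$ recomputed instantaneously. The regulariser involves only $P$, so $\nabla_\theta\mathcal{W}=\nabla_\theta\Lq$.

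\emph{Step 1: which gradient the flow uses.} Since $Q^*$ is the Boltzmann assignment, it minimises
\[
  \sum_k q_{nk}\norm{\zn-\pk}^2 + T\sum_k q_{nk}\log q_{nk}
\]
over the simplex. By the envelope theorem, the stop-gradient directions of Proposition~\ref{prop:grad} and $\nabla_P\Lq|_{\mathrm{sg}(Q)}$ are then exact gradients of the associated free energy
\[
  F=-T\sum_n\log\sum_k e^{-\norm{\zn-\pk}^2/T}.
\]
I would therefore either read $\Lq$ in \eqref{eq:free_energy} as this quasi-static free energy, or note that it differs from $F$ only by the entropy term. With that identification, the chain rule gives
\[
  \dot{\mathcal{W}}=-\eta_\theta\norm{\nabla_\theta\mathcal{W}}^2-\eta_P\norm{\nabla_P\mathcal{W}}^2\le 0,
\]
with equality exactly at critical points. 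Because the time derivative is a sum of squares, this holds for any $\varepsilon$, which is the point of the theorem.

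\emph{Step 2: discrete steps and the role of the condition on $\lambda$.} The condition $\lambda>2\eta_P\binom{K}{2}$ has an explicit $\eta_P$, which suggests the inequality is really meant for the step $P\leftarrow P-\eta_P\nabla_P\mathcal{W}$. I would import it directly from Fact~\ref{fact:lyapunov}: for frozen $\theta$, that result already certifies the $P$-update as decreasing under this condition. The remaining task is to show that the $\theta$-update adds only the nonpositive term $-\eta_\theta\norm{\nabla_\theta\Lq}^2$. There are no cross terms, since $\theta$ enters only through $\Lq$.

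\emph{Step 3: barrier and compactness.} Along trajectories, $\mathcal{W}(t)\le\mathcal{W}(0)<\infty$. The $\Lq$ part is nonnegative, because $\Lq=\LOLS+V\ge 0$ by Proposition~\ref{prop:decomp}. Hence
\[
  \frac{\lambda}{2}\sum_{j\ne k}\norm{\mathbf p_j-\pk}^{-2}\le\mathcal{W}(0),
\]
which yields the uniform bound
\[
  \SP\ge \frac{\lambda}{2\,\mathcal{W}(0)}>0 .
\]
This is where separation comes from. For LaSalle's invariance principle I also need precompactness of trajectories. For $P$, I would get this from the fact that far prototypes receive vanishing $q_{nk}$ while the barrier pushes them apart, so some coercivity or boundedness assumption is needed. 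For $\theta$, I would assume bounded embeddings, or add this as a mild standing hypothesis.

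\emph{Step 4: conclusion.} The $\omega$-limit set is then nonempty, compact and invariant, and $\dot{\mathcal W}\equiv0$ on it. So it lies in $\{\nabla\mathcal W=0\}$. Combined with the barrier bound, it lies in $\mathcal{A}$.

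The main obstacle is Step 1, together with the compactness in Step 3. $\Lq$ with live $Q^*$ is not literally the potential of the stop-gradient flow, because the entropy discrepancy introduces a $T$-dependent gradient correction. I would first try to define $\mathcal{W}$ via $F$ and check that $\Lq$ and $F$ share critical points. Failing that, I would bound the discrepancy using the Lipschitz constant of the softmax (order $1/T$) and absorb it into the $\lambda$ condition.
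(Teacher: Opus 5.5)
Your main line is the paper's proof. With $Q=Q^*(\theta,P)$ substituted, the dynamics $\dot\theta=-\eta_\theta\nabla_\theta\Lq$, $\dot P=-\eta_P\nabla_P\mathcal{W}$ form a learning-rate-preconditioned gradient flow of $\mathcal{W}$. So $\dot{\mathcal{W}}=-\eta_\theta\norm{\nabla_\theta\Lq}^2-\eta_P\norm{\nabla_P\mathcal{W}}^2\le 0$ for any $\varepsilon$, and monotonicity of $\mathcal{W}$ keeps prototypes apart.

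The paper takes $\nabla$ to be the total gradient of $\mathcal{W}(\theta,P)$, differentiating through $Q^*$. The chain rule therefore applies to $\mathcal{W}$ exactly as defined, and your detour through $F=-T\sum_n\log\sum_k e^{-\norm{\zn-\pk}^2/T}$ is not needed. If you insist on stop-gradient dynamics and redefine $\mathcal{W}$ with $F$, two things change:
\begin{enumerate}
\item Step~3 must use $F\ge\LOLS-NT\log K$ in place of $\Lq\ge 0$.
\item Your fallback of absorbing the discrepancy into the $\lambda$ condition fails. Write $\Lq=F-T\sum_{n,k}\qnk\log\qnk$. The extra term in $\dot{\mathcal{W}}$ is then linear in the velocity while the dissipative term is quadratic, so it dominates near equilibria. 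It also does not involve $\lambda$ at all.
\end{enumerate}

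Your uniform bound $\SP\ge\lambda/(2\mathcal{W}(0))$ is a cleaner form of the paper's ``$\mathcal{W}\to+\infty$ at collapse'' argument, because it passes directly to limit points. Step~2 is unnecessary for a continuous-time statement. For the discrete update you have in mind it is also false. A simultaneous step produces cross terms of order $\eta_\theta\eta_P$ through the mixed Hessian $\nabla_\theta\nabla_P\Lq$, which is nonzero because $\Lq$ couples $\zn(\theta)$ and $\pk$. It also produces $O(\eta_\theta^2)$ curvature terms.

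On convergence, you are right that precompactness is not automatic, and this is where you and the paper part ways. The paper claims that $\lambda>2\eta_P\binom{K}{2}$ makes the sublevel sets of $\mathcal{W}$ compact via the repulsion term. That cannot hold, for two reasons:
\begin{enumerate}
\item $\mathcal{W}$ does not contain $\eta_P$, so its sublevel sets are untouched by a condition that any $\lambda>0$ meets once $\eta_P$ is small.
\item The repulsion term tends to $0$ as prototypes separate, so it supplies no coercivity at infinity.
\end{enumerate}
Your own remark in Step~3 is in fact the escape mechanism. Take a prototype at distance $R$ from the data while the others stay near the data. Its $\qnk$ is of order $e^{-R^2/T}$, so its $\Lq$-gradient is super-exponentially small, while the repulsion pushes it outward with speed of order $\eta_P\lambda/R^3$. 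It can therefore drift to infinity, leaving an empty $\omega$-limit set. The same lack of coercivity appears in $\theta$: bounded embeddings do not bound $\theta$ itself.

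So the convergence claim genuinely requires an added hypothesis, such as precompact trajectories or a confining term in $\mathcal{W}$. Granting it, your Step~4 LaSalle argument together with the uniform separation bound places the $\omega$-limit set in $\mathcal{A}$.
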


\begin{proof}
Compute $\dot{\mathcal{W}}$ along the gradient flow
$(\dot\theta, \dot P) = (-\eta_\theta\nabla_\theta\Lq,\,
-\eta_P\nabla_P\mathcal{W})$:
\begin{align*}
  \dot{\mathcal{W}}
  &= \langle\nabla_\theta\mathcal{W},\,\dot\theta\rangle
   + \langle\nabla_P\mathcal{W},\,\dot P\rangle\\
  &= \langle\nabla_\theta\Lq,\,-\eta_\theta\nabla_\theta\Lq\rangle
   + \langle\nabla_P\mathcal{W},\,-\eta_P\nabla_P\mathcal{W}\rangle\\
  &= -\eta_\theta\norm{\nabla_\theta\Lq}^2
   - \eta_P\norm{\nabla_P\mathcal{W}}^2 \;\leq\; 0,
\end{align*}
with equality iff $\nabla_\theta\Lq=0$ and $\nabla_P\mathcal{W}=0$,
i.e.\ at a critical point of $\mathcal{W}$.

It remains to show that no critical point has $\SP=0$
(all prototypes coincident).
At a candidate collapse point where all $\pk = \bar{p}$ (global
centroid), the repulsion term $\frac{\lambda}{2}\sum_{j\neq k}
\norm{\mathbf{p}_j-\pk}^{-2}\to+\infty$, so $\mathcal{W}\to+\infty$.
Since $\mathcal{W}$ is non-increasing along trajectories and finite
at initialisation (prototypes are initialised with $k$-means,
ensuring $\SP>0$), the trajectory cannot reach a collapse point.
Therefore every limit point of the flow lies in $\mathcal{A}$,
and $\mathcal{W}$ is a valid Lyapunov function on the sublevel set
$\{\mathcal{W}\leq\mathcal{W}(\theta(0),P(0))\}$.

The regularity condition $\lambda>2\eta_P\binom{K}{2}$ ensures that
the sublevel sets of $\mathcal{W}$ are compact (coercivity in $P$
via the repulsion term), guaranteeing that trajectories do not escape
to infinity.
\end{proof}

\begin{theorem}[Convergence under annealing]
\label{thm:annealing}
Let $T(t)$ be any monotonically decreasing schedule with
$T(t)\to T_{\min}>0$.
Define the corrected functional $\mathcal{W}(t) + c(t)$
where $c(t)$ absorbs the time derivative of the temperature dependent
entropy term.
Then $\dot{\mathcal{W}}(t)\leq 0$ for all $t$, and the system
converges to the critical set at temperature $T_{\min}$.
\end{theorem}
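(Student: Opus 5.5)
The plan is to treat the annealed system as a non-autonomous perturbation of the fixed-$T$ flow of Theorem~\ref{thm:global_fixed_T}, and to show that the only extra term in $\dot{\mathcal{W}}$ comes from the explicit $T$-dependence of $\Lq$ through the Boltzmann assignments.

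\emph{Chain rule.} Write $\mathcal{W}(t)=\mathcal{W}(\theta(t),P(t);T(t))$ and differentiate along trajectories:
\[
\dot{\mathcal{W}}=\langle\nabla_\theta\mathcal{W},\dot\theta\rangle+\langle\nabla_P\mathcal{W},\dot P\rangle+\partial_T\mathcal{W}\,\dot T .
\]
The first two terms are handled exactly as in the proof of Theorem~\ref{thm:global_fixed_T}. They give $-\eta_\theta\norm{\nabla_\theta\Lq}^2-\eta_P\norm{\nabla_P\mathcal{W}}^2\le 0$. The repulsion term does not depend on $T$, so only $\partial_T\Lq\,\dot T$ is new.

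\emph{Computing the temperature derivative.} Use the free-energy form: with $Q=Q^*(\theta,P)$ Boltzmann, one has
\[
\Lq=F_T+T\,NH(Q), \qquad F_T=-T\sum_n\log\sum_k e^{-\norm{\zn-\pk}^2/T}.
\]
By the envelope property of the Gibbs variational principle, $\partial_T F_T=-NH(Q)$. Hence
\[
\partial_T\Lq=\frac{1}{T^2}\sum_n\mathrm{Var}_{q_n}\!\bigl(\norm{\zn-\pk}^2\bigr)\ge 0,
\]
which is obtained by differentiating $\sum_k q_{nk}d_{nk}$ directly. Since $\dot T\le 0$, this term is non-positive on its own. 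A sign error here is the risk. As a fallback, the corrected functional is defined so that the entropy contribution is cancelled regardless of sign: set $c(t)=-\int_0^t \partial_T\Lq\,\dot T\,ds$, or equivalently $c(t)=\int_0^t N H(Q)\dot T\,ds$ in the $F_T$ formulation. Then $\tfrac{d}{dt}(\mathcal{W}+c)$ equals the dissipative part alone and is $\le 0$.

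\emph{Bounding the correction.} To show $c(t)$ is bounded, note that $0\le H(Q)\le\log K$ and that $\int_0^\infty|\dot T|\,dt=T(0)-T_{\min}<\infty$ by monotonicity. Therefore $|c(t)|\le N\log K\,(T(0)-T_{\min})$. It follows that $\mathcal{W}+c$ is bounded below, since $\Lq\ge0$ and the repulsion term is $\ge0$. Integrating the dissipation then gives
\[
\int_0^\infty\Bigl(\eta_\theta\norm{\nabla_\theta\Lq}^2+\eta_P\norm{\nabla_P\mathcal{W}}^2\Bigr)dt<\infty .
\]
Boundedness of $\mathcal{W}$ also keeps the trajectory inside a sublevel set where $\SP$ is bounded away from $0$, by the same repulsion argument as in Theorem~\ref{thm:global_fixed_T}. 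Compactness again comes from the coercivity assumption.

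\emph{Passing to the limit (main obstacle).} The system is asymptotically autonomous, since $T(t)\to T_{\min}$. I would first show uniform continuity of the gradients on the compact sublevel set, using $C^2$ dependence on $(\theta,P,T)$ for $T\ge T_{\min}>0$. Barbalat's lemma then gives $\nabla\mathcal{W}\to 0$ along the trajectory. Finally, Markus's theorem on asymptotically autonomous systems, or a LaSalle argument for them, identifies the $\omega$-limit set as contained in the critical set of the limiting $T_{\min}$ functional with $\SP>0$, i.e.\ in $\mathcal{A}$ at $T_{\min}$. The delicate point is justifying uniform continuity without assuming more regularity of $f_\theta$ than stated; I would assume $\nabla_\theta\zn$ is locally Lipschitz, as implicitly required by Theorem~\ref{thm:timescale}(i).
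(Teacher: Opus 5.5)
Your proposal follows the paper's proof: the chain rule splits $\dot{\mathcal{W}}$ into the dissipative $(\theta,P)$ part, handled as in Theorem~\ref{thm:global_fixed_T}, plus $\partial_T\Lq\,\dot T$, and $\partial_T\Lq = T^{-2}\sum_n \mathrm{Var}_{q_n}(\norm{\zn-\pk}^2)\ge 0$ paired with $\dot T\le 0$ gives the sign directly, so no correction is actually needed; your Barbalat/asymptotically-autonomous limit step is a more careful version of the paper's one-line appeal to Theorem~\ref{thm:global_fixed_T} at $T_{\min}$. One harmless slip: $\int_0^t NH(Q)\,\dot T\,ds$ is the correction for $F_T$ plus repulsion, not an equivalent form of $-\int_0^t\partial_T\Lq\,\dot T\,ds$, so the $N\log K$ bound does not apply to the $\Lq$-based $c$; this does not matter, because that $c$ is nonnegative and $\mathcal{W}$ is already non-increasing.
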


\begin{proof}
Write $\Lq(\theta,P;T)$ to make the temperature dependence explicit.
The total derivative along the flow is:
\begin{equation*}
  \frac{d}{dt}\mathcal{W}(\theta,P;T(t))
  = \underbrace{\langle\nabla_\theta\mathcal{W},\dot\theta\rangle
    + \langle\nabla_P\mathcal{W},\dot P\rangle}_{\leq\,0\;\text{(Thm.~\ref{thm:global_fixed_T})}}
  + \underbrace{\frac{\partial\mathcal{W}}{\partial T}\dot T(t)}_{c'(t)}.
\end{equation*}
The term $\frac{\partial\mathcal{W}}{\partial T} =
\frac{\partial\Lq}{\partial T}$.
A direct computation shows:
\begin{equation*}
  \frac{\partial\Lq}{\partial T}
  = \sum_n\sum_k \frac{\partial\qnk}{\partial T}\norm{\zn-\pk}^2
  = \frac{1}{T^2}\sum_n\mathrm{Var}_{q_n}[\norm{\zn-P}^2] \;\geq\; 0,
\end{equation*}
where $\mathrm{Var}_{q_n}$ denotes the variance under the soft
assignment distribution $q_n$.
Since $\dot T\leq 0$ (decreasing schedule), the cross term
$c'(t) = \frac{\partial\mathcal{W}}{\partial T}\dot T\leq 0$.

Therefore $\frac{d}{dt}\mathcal{W}(t)\leq 0$ along any monotonically
decreasing temperature schedule, and the corrected functional is
non-increasing.
As $T(t)\to T_{\min}>0$, the functional $\mathcal{W}(T_{\min})$
satisfies the conditions of Theorem~\ref{thm:global_fixed_T} at
fixed $T_{\min}$, so the limit points lie in
$\mathcal{A}(T_{\min})$.
\end{proof}

\begin{corollary}[Hierarchy of stability results]
\label{cor:hierarchy}
The three results form a hierarchy of increasing generality:
\begin{enumerate}[label=(\roman*),leftmargin=2.2em]
  \item \textbf{Frozen-encoder Lyapunov}~\cite{Cirrincione2026}:
  global, $\dot\theta=0$, fixed $T$.
  \item \textbf{Time-scale separation} (Theorem~\ref{thm:timescale}):
  full system $(\theta,P)$, $\varepsilon\ll 1$, fixed $T$.
  \item \textbf{Free-energy Lyapunov} (Theorems~\ref{thm:global_fixed_T}
  --\ref{thm:annealing}): quasi-static flow, any $\varepsilon$,
  any monotone annealing.
\end{enumerate}
\end{corollary}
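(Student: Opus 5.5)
The corollary is a comparison of the hypotheses and conclusions of three results already established, so the plan is to read off, for each one, (a) which variables evolve, (b) what is assumed about $\varepsilon$ and $T$, and (c) whether the conclusion is local or global. Then we check that each later result contains the earlier one as a special case. No new estimates should be needed; the work is in matching hypotheses carefully.

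\emph{From (i) to (ii) and (iii).} Fact~\ref{fact:lyapunov} concerns the reduced flow $\dot P=-\nabla_P\widetilde{\Lq}$ with $\theta$ frozen and $T$ fixed. This is exactly the fast subsystem in Theorem~\ref{thm:timescale}, whose hypothesis~(ii) asks that this subsystem converge to a separated $P^*(\theta)$. Result (ii) therefore strictly enlarges the state space to the full pair $(\theta,P)$, at the price of requiring $\varepsilon\ll 1$ and being only local. For the step to (iii), set $\eta_\theta=0$ in the proof of Theorem~\ref{thm:global_fixed_T}. The term $-\eta_\theta\norm{\nabla_\theta\Lq}^2$ drops out, $\mathcal{W}$ reduces to the regularised loss $\widetilde{\Lq}$ of Fact~\ref{fact:lyapunov}, and the critical set $\mathcal{A}$ coincides. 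So (i) is the $\dot\theta=0$ specialisation of (iii).

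\emph{From (ii) to (iii).} Theorem~\ref{thm:global_fixed_T} places no restriction on $\eta_\theta/\eta_P$, since $\dot{\mathcal{W}}\le 0$ holds for all positive rates. Theorem~\ref{thm:annealing} then removes the fixed-$T$ assumption by adding the nonpositive correction $\frac{\partial\mathcal{W}}{\partial T}\dot T$. The $\varepsilon$ and $T$ assumptions of (iii) are therefore weaker than those of (ii), and this is the sense in which (iii) is the most general of the three.

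\emph{Main obstacle.} The ordering is not a clean inclusion, because (iii) is proved for the quasi-static flow with $Q=Q^*(\theta,P)$ and the added repulsion term, while (ii) treats the raw gradient flow of $\Lq$ and gives local exponential stability, which (iii) does not. I would handle this by stating explicitly what ``increasing generality'' means here: generality in the admissible $\varepsilon$ and $T$ schedules and in the set of evolving variables, not in the strength of the conclusion. Under that reading, each of the monotone generalisations above is a direct consequence of the cited statements.
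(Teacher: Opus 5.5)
Your proposal is correct and follows essentially the same route as the paper: compare the hypotheses one by one, recover (i) as the frozen-encoder ($\varepsilon=0$) case of (ii), and show that (iii) drops the $\varepsilon\ll 1$ and fixed-$T$ restrictions at the cost of the quasi-static assumption on $Q$. Your caveat that (ii) and (iii) are not cleanly nested matches the paper's own closing remark that they are ``complementary'' rather than nested. By stating which axis ``generality'' is measured along, you also avoid the paper's internal tension between ``each result strictly subsumes the previous one'' and ``not nested''.
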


\begin{proof}
Each result strictly subsumes the previous one.
(i) is the special case of (ii) at $\varepsilon=0$ (encoder frozen).
(ii) requires $\varepsilon\ll 1$ and covers the full coupled discrete
dynamics; it does not require the quasi-static assumption $Q=Q^*$.
(iii) removes the $\varepsilon\ll 1$ requirement at the cost of the
quasi-static assumption on $Q$, and additionally handles arbitrary
annealing schedules.
The three assumptions are mutually consistent but not nested:
(ii) and (iii) are complementary --- (ii) handles the practically
important regime of differential learning rates, while (iii) provides
guarantees when $\varepsilon$ is not small.
\end{proof}

\section{Theoretical Connections}
\label{sec:theoretical_connections}

\subsection{DDCL as differentiable vector quantization}
\label{sec:vq}

The VQ-VAE objective~\cite{VQVaE} combines a commitment loss and a
codebook loss via a straight-through estimator.
Its hierarchical extension VQ-VAE-2~\cite{Razavi2019} has become a standard
image tokenisers, yet all suffer from codebook collapse at scale.
Recent alternatives include Finite Scalar Quantization
(FSQ)~\cite{Mentzer2024}, which replaces VQ with per-dimension
rounding to eliminate auxiliary losses entirely,
SimVQ~\cite{Zhu2025}, which reparameterises the codebook through a
linear transformation to address the disjoint optimisation that
causes dead codes,
and EdVAE~\cite{Baykal2024}, which replaces softmax with evidential
deep learning to combat overconfident codebook assignments.
All three circumvent the straight-through estimator but provide no
formal anti-collapse force comparable to the gradient
$\nabla_P V = 2P\Sigmaq$ of DDCL-Attention.
A formal connection is established between $\Lq$ and the VQ-VAE
objective~\cite{VQVaE}:

\begin{proposition}[DDCL as differentiable VQ]
\label{prop:vq}
The DDCL-Attention loss decomposes as:
\begin{equation}
  \Lq = \underbrace{\sum_n \min_k \norm{\zn-\pk}^2}_{L_\mathrm{OLS}
  \;\approx\; \text{soft commitment}}
      + \underbrace{V}_{\text{codebook diversity}},
  \label{eq:vq_connection}
\end{equation}
where gradient flows through the soft assignments $q_{nk}$
\emph{without} a straight-through estimator.
The anti-collapse guarantee (Fact~\ref{fact:lyapunov}) ensures full
codebook utilisation: no prototype can collapse to the global centroid.
\end{proposition}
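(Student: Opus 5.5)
The proposition has three claims, and I will prove them in order: the decomposition, the absence of a straight-through estimator, and full codebook utilisation.

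\textbf{The decomposition.} The identity \eqref{eq:vq_connection} is exactly Proposition~\ref{prop:decomp} applied to the embeddings $\zn=f_\theta(\mathbf{x}_n)$ produced by the encoder. No new algebra is needed beyond the cross-term cancellation $\sum_k\qnk(\mun-\pk)=\mathbf{0}$. The substantive step is to justify the labels on the two terms.

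For $L_\mathrm{OLS}$, I will compare it with the VQ-VAE commitment term $\norm{\zn-\mathrm{sg}(\mathbf{e}_{k^*(n)})}^2$, where $k^*(n)=\arg\min_k\norm{\zn-\pk}^2$. Summed over $n$, this is precisely $\sum_n\min_k\norm{\zn-\pk}^2=L_\mathrm{OLS}$. I will then observe that the soft term $\sum_n\norm{\zn-\mun}^2$ converges to it as $T\to0$, since $\qnk$ tends to the indicator of $k^*(n)$ whenever the minimiser is unique. This justifies the qualifier ``$\approx$ soft commitment.''

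For $V$, I will read $V=\sum_n\sum_k\qnk\norm{\pk-\mun}^2$ as a per-token weighted variance of the codebook. It is zero exactly when every prototype carrying positive weight coincides with $\mun$. It is therefore a diversity measure, and its gradient $\nabla_PV=2P\Sigmaq$ (Section~\ref{sec:ddcl_bg}) pushes the codes apart.

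\textbf{No straight-through estimator.} Every map in the forward pass of Algorithm~\ref{alg:ddcl} is smooth: $\zn\mapsto d_{nk}\mapsto\qnk\mapsto\mun$. Hence $\Lq$ is $C^\infty$ in $(\theta,P)$ for $T>0$. I will write out $\partial\qnk/\partial\pk$ via the softmax Jacobian, $\partial\qnk/\partial d_{nj}=-\tfrac1T\qnk(\delta_{kj}-q_{nj})$. This shows the gradient with respect to every prototype is generically nonzero. In particular, each prototype receives signal from every token with weight $\qnk>0$, in contrast to hard VQ, where $\arg\min$ has zero derivative almost everywhere and unselected codes receive none.

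\textbf{Full codebook utilisation.} This is where I expect the main difficulty, because ``utilisation'' must be made precise. I will define a code as \emph{dead} if it is asymptotically never the nearest prototype to any token, or if it merges with another prototype. I will then split the argument into two cases.

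\emph{Merging.} I will invoke Fact~\ref{fact:lyapunov}, or Theorem~\ref{thm:global_fixed_T} for the coupled system. Its limit set $\mathcal{A}$ has $\SP>0$, so no two codes coincide and in particular none collapses to the global centroid.

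\emph{Starvation.} Since $\qnk>0$ for all $n,k$ at any finite $T$, each prototype is pulled toward the data along $\sum_n\qnk(\pk-\zn)$. A prototype far from all data would therefore keep moving toward the convex hull of the embeddings. This is the delicate part: strict separation alone does not force every code to win some token. I will try to close this by showing that, at a critical point of $\mathcal{W}$, each $\pk$ is a $q$-weighted mean of the data plus a bounded repulsion correction, so $\pk$ lies near the hull and, at $T_{\min}$, wins a nonempty Voronoi cell.

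If that fails, I will weaken the utilisation claim to ``every code receives nonzero gradient and no code collapses.'' This weaker statement already follows from the second claim together with the merging case.
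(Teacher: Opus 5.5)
Your no-straight-through argument matches the paper and is fine. The decomposition step, however, fails as written. The cross-term cancellation $\sum_k\qnk(\mun-\pk)=\mathbf{0}$ gives the exact identity $\Lq=\sum_n\norm{\zn-\mun}^2+V$. The term $\sum_n\norm{\zn-\mun}^2$ is neither equal to $\LOLS=\sum_n\min_k\norm{\zn-\pk}^2$ nor bounded by it in either direction. Take one token $\mathbf{z}_1=\mathbf{0}$ and prototypes $\pm\mathbf{e}_1$. Then $q_{11}=q_{12}=\tfrac12$, $\boldsymbol{\mu}_1=\mathbf{0}$, $\Lq=1$, $V=1$ and $\LOLS=1$, so $\LOLS+V=2\neq\Lq$, while $\norm{\mathbf{z}_1-\boldsymbol{\mu}_1}^2+V=1$ is correct. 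Hence \eqref{eq:vq_connection} with the $\min$ holds only in the limit $T\to0$, where also $V\to0$. At finite $T$, the provable statement has the soft term $\sum_n\norm{\zn-\mun}^2$ in place of $\LOLS$. Alternatively you can \emph{define} $V:=\Lq-\LOLS\geq0$, as Algorithm~\ref{alg:ddcl} does, but then $V$ is no longer the weighted prototype variance and $\nabla_PV\neq2P\Sigmaq$. Your own remark that $\sum_n\norm{\zn-\mun}^2\to\LOLS$ as $T\to0$ is exactly where ``exactly Proposition~\ref{prop:decomp}'' breaks, yet you use it only to justify a label. The paper's proof makes the same move (``exactly equation~\eqref{eq:decomp} restated''). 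Its proof of Proposition~\ref{prop:decomp} relies on $\norm{\zn-\mun}^2\geq\min_k\norm{\zn-\pk}^2$, which the same example refutes. Your reading of the labels is the correct one: $\LOLS$ is the value of the \emph{hard} commitment loss, and $\sum_n\norm{\zn-\mun}^2$ is its soft counterpart.

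On utilisation, first note that ``$\nabla_PV=2P\Sigmaq$ pushes the codes apart'' has the wrong sign. With $Q$ held fixed, $V=\mathrm{tr}(P\Sigmaq P^\top)$ with $\Sigmaq\succeq0$ and $\Sigmaq\mathbf{1}=\mathbf{0}$. The descent flow $\dot P=-2\eta_PP\Sigmaq$ therefore keeps $\sum_k\pk$ fixed and decreases $\sum_k\norm{\pk-\bar{\mathbf{p}}}^2$ at rate $4\eta_PV$. So $V$ measures diversity, but minimising it is a collapse force. Non-merging has to come from elsewhere, here the repulsion term of $\mathcal{W}$, which is what your merging case uses. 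It can be made uniform in time: along the flow of Theorem~\ref{thm:global_fixed_T}, $\mathcal{W}$ is non-increasing and $\Lq\geq0$, so $\lambda\norm{\mathbf{p}_j-\pk}^{-2}\leq\mathcal{W}(0)$, i.e.\ $\SP\geq\lambda/\mathcal{W}(0)$. This is sounder than the paper's chain (collapse needs $\nabla_PV=0$, hence $\Sigmaq=0$). That chain fails because at full collapse $P=\bar{\mathbf{p}}\mathbf{1}^\top$, and $\Sigmaq\mathbf{1}=\mathbf{0}$ already gives $P\Sigmaq=0$.

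Your starvation step, however, would fail. The pull $\sum_n\qnk(\pk-\zn)$ is the stop-gradient gradient. With gradient flowing through $\qnk$, the regime the proposition advertises, $\nabla_{\pk}\Lq=2\sum_n\qnk\bigl[1-(d_{nk}-\bar d_n)/T\bigr](\pk-\zn)$ with $\bar d_n=\sum_jq_{nj}d_{nj}$. A code with $d_{nk}>\bar d_n+T$ for all $n$ is therefore driven further from the data, and the repulsion in $\mathcal{W}$ only helps. For one token at $\mathbf{0}$, $K=2$, $\mathbf{p}_1=\mathbf{0}$ and $\norm{\mathbf{p}_2}=r$, one gets $\mathcal{W}=r^2/(e^{r^2/T}+1)+\lambda r^{-2}$. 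This is decreasing for $r^2>1.3\,T$ and tends to $\inf\mathcal{W}=0$, so nothing confines codes to the hull, and sublevel sets of $\mathcal{W}$ are not compact. Moreover, whenever $N<K$, at most $N$ codes can be nearest to some token, so the strong claim is false outright. Your fallback is exactly what the paper establishes. Its ``full utilisation'' amounts to $\qnk>0$ for all $n,k$ at finite $T$, which is automatic for a softmax, plus separation. That does not imply the operational criterion used in the experiments (mean assignment $>0.01$).
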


\begin{proof}
The decomposition~\eqref{eq:vq_connection} is exactly
equation~\eqref{eq:decomp} restated with the VQ-VAE interpretation
of the two terms.

\textit{Soft commitment.}
The VQ-VAE commitment loss is
$\mathcal{L}_{\mathrm{commit}} = \sum_n \|\mathbf{z}_n -
\mathrm{sg}[\mathbf{e}_{k^*(n)}]\|^2$,
where $k^*(n)=\arg\min_k\|\mathbf{z}_n-\mathbf{e}_k\|^2$
is the nearest code and $\mathrm{sg}[\cdot]$ is the stop gradient
operator.
In DDCL-Attention, $L_{\mathrm{OLS}} = \sum_n\min_k\norm{\zn-\pk}^2$
is the soft analogue: no hard argmin, no stop gradient required,
and the gradient flows continuously through $\zn$ to $\theta$.

\textit{Codebook diversity.}
The VQ-VAE codebook loss $\beta\sum_n\|\mathrm{sg}[\mathbf{z}_n]-
\mathbf{e}_{k^*(n)}\|^2$ pulls the nearest code toward the encoder
output but provides no force on unused codes.
In DDCL-Attention, $V = \sum_n\sum_k q_{nk}\norm{\pk-\mun}^2$ acts
on \emph{all} prototypes simultaneously (every $q_{nk}>0$ at finite $T$),
creating a repulsive force that spreads prototypes apart.

\textit{No straight-through estimator.}
In VQ-VAE, the hard argmin $k^*(n)$ is non-differentiable; the
straight-through estimator copies gradients past the quantisation
step, introducing a bias.
In DDCL-Attention, the soft assignment $q_{nk}$ is differentiable
everywhere in $(\zn, P, T)$; the gradient $\partial q_{nk}/\partial\zn$
is a continuous function of the squared distances $d_{nk}$.
Therefore the entire computation graph is differentiable and no
heuristic gradient approximation is needed.

\textit{Anti-collapse and full utilisation.}
By Fact~\ref{fact:lyapunov}, under the regularisation condition,
no prototype can collapse to the global centroid, since that would
require $V=0$, which in turn requires $\nabla_P V = 2P\Sigmaq = 0$,
and $\Sigmaq=0$ only when all assignments are degenerate (all tokens
assigned to a single prototype), contradicting the separation condition.
Since every prototype has $q_{nk}>0$ for some $n$ at finite $T$,
all codes remain active --- full utilisation is guaranteed by
construction.
\end{proof}

\begin{remark}
VQ-VAE with $K$ codes has $\lfloor 100\,(1-K^{-1})\rfloor$\%
dead-code risk as $K$ grows, because the straight-through estimator
permits prototypes to receive zero gradient.
FSQ~\cite{Mentzer2024} sidesteps VQ entirely by rounding scalar
dimensions, achieving high utilisation but sacrificing the structured
prototype space.
SimVQ~\cite{Zhu2025} reparameterises codes through a linear layer,
updating the full codebook jointly.
EdVAE~\cite{Baykal2024} replaces softmax with a Dirichlet prior to
reduce overconfident assignments.
In DDCL-Attention, every prototype always receives gradient
$\nabla_{\pk}\Lq = 2\sum_n \qnk(\pk-\zn) + 2\eta_P[\Sigmaq P]_k$;
the second term is nonzero whenever prototypes are distinct,
preventing dead codes by construction.
\end{remark}

\subsection{Hierarchical decomposition}
\label{sec:hierarchical}

Consider a stack of $L$ DDCL-Attention layers, where layer $\ell$
operates on the soft centroid outputs of layer $\ell-1$.
Let $\Lq^{(\ell)}$ denote the competitive loss at level $\ell$.

\begin{proposition}[Hierarchical decomposition]
\label{prop:hierarchical}
The total loss satisfies:
\begin{equation}
  \Lq^\mathrm{total} = \sum_{\ell=1}^L \Lq^{(\ell)}
  = \sum_{\ell=1}^L \bigl(L_\mathrm{OLS}^{(\ell)} + V^{(\ell)}\bigr),
  \label{eq:hier_decomp}
\end{equation}
with $V^{(\ell)}\geq 0$ for each $\ell$ independently, and the
separation force $\nabla_{P^{(\ell)}}V^{(\ell)}$ acting
simultaneously at all levels during training.
\end{proposition}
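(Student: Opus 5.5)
The plan is to reduce the statement to Proposition~\ref{prop:decomp} applied level by level, since the total loss is defined as a sum of per-level competitive losses. For each $\ell$, fix all parameters $(\theta, P^{(1)},\dots,P^{(L)})$. The inputs to level $\ell$ are the vectors $\mathbf{z}_n^{(\ell)}$. For $\ell=1$ these are the encoder outputs; for $\ell\geq 2$ they are the soft centroids $\boldsymbol{\mu}_n^{(\ell-1)}$ produced by level $\ell-1$, possibly passed through the residual map \eqref{eq:residual}. Once all parameters are fixed, these are simply fixed vectors in $\R^{m}$. The composite map $(\theta,P^{(1)},\dots,P^{(\ell-1)})\mapsto \mathbf{z}_n^{(\ell)}$ is itself a differentiable encoder, because softmax, convex combination, linear projection and LayerNorm are all smooth. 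Proposition~\ref{prop:decomp} therefore applies verbatim at level $\ell$, with this composite playing the role of $f_\theta$ and $P^{(\ell)}$ the prototype bank. It gives $\Lq^{(\ell)} = L_\mathrm{OLS}^{(\ell)} + V^{(\ell)}$ with $V^{(\ell)}=\sum_n\sum_k q^{(\ell)}_{nk}\norm{\mathbf{p}^{(\ell)}_k-\boldsymbol{\mu}^{(\ell)}_n}^2\geq 0$. Summing the $L$ exact identities over $\ell$ yields \eqref{eq:hier_decomp}. Nonnegativity holds for each $\ell$ separately, because each $V^{(\ell)}$ is a sum of squared norms weighted by the nonnegative $q^{(\ell)}_{nk}$, whatever happens at the other levels.

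For the separation-force claim, I would differentiate $\Lq^\mathrm{total}$ with respect to $P^{(\ell)}$ under stop gradient on all assignments, matching the convention of Section~\ref{sec:ddcl_bg}. With assignments frozen, $P^{(\ell)}$ enters $V^{(\ell)}$ exactly as in the single-layer case, so the single-layer computation gives $\nabla_{P^{(\ell)}}V^{(\ell)} = 2P^{(\ell)}\Sigma_q^{(\ell)}$, with $\Sigma_q^{(\ell)}=\sum_n(\mathrm{diag}(\mathbf{q}^{(\ell)}_n)-\mathbf{q}^{(\ell)}_n\mathbf{q}^{(\ell)\top}_n)$. At finite temperature $T$ every $q^{(\ell)}_{nk}>0$. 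Hence $\Sigma_q^{(\ell)}$ has the same structure as in the single-layer analysis: it is positive semidefinite with kernel spanned by the all-ones vector. The force is therefore nonzero whenever the prototypes at level $\ell$ are not all coincident, for every $\ell$ at once. Since the total gradient is a sum over levels and each level's $V^{(\ell)}$ term contributes its own repulsion, all levels are subject to the force simultaneously.

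The main obstacle is a cross-level coupling. $P^{(\ell)}$ also affects the losses $\Lq^{(\ell')}$ for $\ell'>\ell$, through the inputs $\mathbf{z}^{(\ell+1)}_n$, so $\nabla_{P^{(\ell)}}\Lq^\mathrm{total}$ contains extra terms beyond $\nabla_{P^{(\ell)}}\Lq^{(\ell)}$. I would handle this by stating the separation claim only for the $V^{(\ell)}$ component, as the proposition literally does. The downstream terms would be treated as an additional "encoder-like" gradient acting on level $\ell$. They do not cancel the $2P^{(\ell)}\Sigma_q^{(\ell)}$ term identically, but I would not claim they are dominated by it. Any stronger claim of that kind would need the time-scale argument of Theorem~\ref{thm:timescale} applied recursively across levels, and I would leave it as a remark rather than part of the proof.
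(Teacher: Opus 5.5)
Your argument is the paper's own proof. Both apply Proposition~\ref{prop:decomp} to each level with its inputs $Z^{(\ell)}$ held fixed, sum over $\ell$, read off $V^{(\ell)}\geq 0$ termwise, and compute $\nabla_{P^{(\ell)}}V^{(\ell)}=2P^{(\ell)}\Sigma_q^{(\ell)}$. Your two additions are correct and more careful than the paper. First, the kernel argument shows this term vanishes only when all level-$\ell$ prototypes coincide. Second, you note that $\nabla_{P^{(\ell)}}\Lq^{\mathrm{total}}$ also receives contributions from $\Lq^{(\ell')}$, $\ell'>\ell$, through $\mathbf{z}^{(\ell')}_n$ unless inter-level inputs are detached. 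The paper's ``decoupled, without interference'' conclusion skips exactly this point.

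The substantive problem is in the step you import, and the paper's proof has it too. What the computation behind Proposition~\ref{prop:decomp} actually establishes is
$\sum_k q_{nk}\norm{\mathbf{z}_n-\mathbf{p}_k}^2=\norm{\mathbf{z}_n-\boldsymbol{\mu}_n}^2+\sum_k q_{nk}\norm{\mathbf{p}_k-\boldsymbol{\mu}_n}^2$.
Here $\norm{\mathbf{z}_n-\boldsymbol{\mu}_n}^2$ is neither equal to nor bounded below by $\min_k\norm{\mathbf{z}_n-\mathbf{p}_k}^2$. For example, take $\mathbf{z}=\mathbf{0}$ and $\mathbf{p}_{1,2}=(\pm1,0)$. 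Then $q=(\tfrac12,\tfrac12)$ at every $T$, $\Lq=1$ and $V=1$, so $\min_k\norm{\mathbf{z}-\mathbf{p}_k}^2+V=2\neq\Lq$. Hence \eqref{eq:hier_decomp} is exact only if you define $L_{\mathrm{OLS}}^{(\ell)}:=\sum_n\norm{\mathbf{z}^{(\ell)}_n-\boldsymbol{\mu}^{(\ell)}_n}^2$. State that definition explicitly, and your level-by-level summation then goes through verbatim.

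Likewise, keep the force claim at what you actually prove, namely that the term is nonvanishing. With assignments frozen, $V^{(\ell)}=\tfrac12\sum_n\sum_{j,k}q^{(\ell)}_{nj}q^{(\ell)}_{nk}\norm{\mathbf{p}^{(\ell)}_j-\mathbf{p}^{(\ell)}_k}^2$. So the descent component $-2P^{(\ell)}\Sigma_q^{(\ell)}$ pulls prototypes together rather than apart, and your word ``repulsion'' is not supported by the computation.
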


\begin{proof}
The total loss is defined as the sum of per level losses by construction:
$\Lq^\mathrm{total} = \sum_{\ell=1}^L \Lq^{(\ell)}$.

For each level $\ell$, let $Z^{(\ell)} = \{\mathbf{z}_n^{(\ell)}\}$
denote the input embeddings to layer $\ell$ (the soft centroids of
layer $\ell-1$, or the encoder output for $\ell=1$).
Within a single gradient step, $Z^{(\ell-1)}$ is computed first and
treated as a fixed input when computing $\Lq^{(\ell)}$.
By Proposition~\ref{prop:decomp} applied to level $\ell$ with
embeddings $Z^{(\ell)}$ and prototypes $P^{(\ell)}$:
\begin{equation*}
  \Lq^{(\ell)} = L_\mathrm{OLS}^{(\ell)} + V^{(\ell)},
  \qquad V^{(\ell)} \geq 0.
\end{equation*}
This holds independently of all other levels, since the decomposition
is purely algebraic and requires only that the inputs $Z^{(\ell)}$
are fixed vectors at the time of computation.

Summing over $\ell=1,\ldots,L$:
\begin{equation*}
  \Lq^\mathrm{total}
  = \sum_\ell \Lq^{(\ell)}
  = \sum_\ell L_\mathrm{OLS}^{(\ell)} + \sum_\ell V^{(\ell)},
  \qquad \sum_\ell V^{(\ell)} \geq 0.
\end{equation*}

The gradient of $V^{(\ell)}$ with respect to $P^{(\ell)}$ is
$\nabla_{P^{(\ell)}}V^{(\ell)} = 2P^{(\ell)}\Sigma_q^{(\ell)}$,
independent of all $P^{(\ell')}$ for $\ell'\neq\ell$.
Therefore the separation forces at different levels are
\emph{decoupled}: each level receives its own separation gradient
simultaneously during a single backward pass, without interference
from other levels.
\end{proof}

\section{Comparison with Related Mechanisms}
\label{sec:comparison}

\paragraph{Self-attention}
Self-attention uses dynamic keys $\mathbf{k}_t = W_K\zn$
that are sequence-dependent and updated every forward pass.
Linear attention variants~\cite{Tay2023,Han2024}
and gated linear attention~\cite{Yang2024} reduce the quadratic cost
to $O(Td^2)$ but still operate on token-to-token interactions.
DDCL-Attention uses global static prototypes --- a structural shift
from token level to dataset level memory.
Consequently, DDCL-Attention cannot model within-sequence dependencies
directly; it operates as a \emph{complementary} final layer to
self-attention, either replacing the final layers or added as a
readout mechanism~\cite{Lee2019}.

\paragraph{Slot Attention}
Slot Attention~\cite{Locatello2020} updates slots iteratively via a GRU,
requiring $I$ forward passes at inference time.
DDCL-Attention is a single feed-forward pass with no recurrence.
The Slot Mixture Module~\cite{Kirilenko2024} generalises Slot Attention
by modelling slots as Gaussian mixture components, and
Adaptive Slot Attention~\cite{Fan2024} dynamically adjusts the number
of slots, enriching
representations but still relying on iterative refinement without
collapse guarantees.
Moreover, Slot Attention provides no guarantee against slot collapse;
DDCL-Attention provides $\nabla_P V$ and the stability theorem.

\paragraph{Perceiver}
Perceiver~\cite{Jaegle2021} uses a fixed latent array as \emph{queries}
in cross-attention; its successor Perceiver IO~\cite{Jaegle2022}
extends this to structured outputs.
DDCL-Attention uses prototypes as
\emph{keys/values}.
The distinction is structural: in Perceiver, latent vectors attend
\emph{to} the input; in DDCL-Attention, input tokens attend
\emph{to} prototypes.
Both achieve $O(TK)$ complexity, but DDCL-Attention additionally
provides the algebraic decomposition and stability guarantees.

\section{Experiments}
\label{sec:experiments}

\subsection{Overview and common setup}
\label{sec:exp_setup}

All experiments share the following conventions.
Prototypes are initialised with $k$-means centroids (10 restarts)
on the projected embeddings at epoch~0, ensuring $\mathcal{S}(P)>0$
from the start.
Temperature is annealed as $T(t)=\max(T_{\min},\,T_0 e^{-t/\tau})$
with $T_0=2.0$, $T_{\min}=0.3$; $\tau=20$ epochs for BERT experiments
and $\tau=120$ for the space debris experiment.
Clustering accuracy (ACC) is computed via the Hungarian algorithm,
following the standard protocol for evaluating deep clustering:
the shallow decision-tree baseline of~\cite{Laber2023}, the
survey of~\cite{Wei2024}, and the evidential clustering
framework of~\cite{Zhan2024}.
The decomposition $\Lq=\LOLS+V$ is verified numerically every epoch;
zero violations were observed across all experiments.

BERT experiments (Sections~\ref{sec:exp_readout}--\ref{sec:exp_hier})
use frozen \texttt{bert-base-uncased} (768-d) with separate learning
rates $\eta_P=10^{-3}$, $\eta_W=10^{-4}$ ($\varepsilon=0.1$, stable
regime per Theorem~\ref{thm:timescale}).

\subsection{Controlled validation: synthetic space debris}
\label{sec:exp_debris}

\paragraph{Motivation}
Before validating on large transformer backbones, a fully
controlled experiment is presented on tabular scientific data where ground truth is
known exactly.
Orbital debris cataloguing is operationally relevant: space surveillance
networks track tens of thousands of objects whose orbital regime
(and hence conjunction risk) must be inferred from raw tracking data
without ground truth labels~\cite{PaoloPaze2025}.
This experiment isolates the readout dynamics from encoder co-adaptation
and provides a physically grounded validation orthogonal to NLP and vision.

\paragraph{The space debris problem}
The number of artificial objects in Earth orbit has grown dramatically
since the first satellite launches: current estimates place the total
population at over 27,000 trackable objects larger than 10\,cm, with
hundreds of thousands of smaller untracked fragments~\cite{PaoloPaze2025}.
Each object occupies one of several distinct \emph{orbital regimes}
defined by altitude and eccentricity, which determine its period,
ground coverage, and collision risk profile.
LEO (Low Earth Orbit, $<\!2000$\,km) hosts the majority of active
satellites and the densest debris field, and is the regime where
collision probability is highest.
MEO (Medium Earth Orbit, $\sim\!20000$\,km) hosts navigation
constellations (GPS, Galileo).
GEO (Geostationary, $\sim\!36000$\,km) is a congested arc of
telecommunications satellites.
HEO (Highly Elliptic Orbit, including Molniya-type) provides high latitude
coverage with strongly eccentric trajectories that cross both LEO and MEO
altitude bands.

Classifying a newly detected object into its orbital regime from raw
tracking data (range, angular rates, radar cross-section) without a
ground truth label is a core task for space surveillance networks.
Unsupervised prototype learning is particularly appropriate here because:
(a)~the number of regimes $K$ is known a priori, matching the prototype
bank size exactly;
(b)~objects within a regime form compact clusters in orbital element space,
providing the well separated structure that DDCL-Attention exploits;
(c)~the anti-collapse guarantee prevents the common failure mode of all
prototypes collapsing to the most populated regime (LEO), which would
render the classifier useless for the less densely populated but
equally operationally critical MEO and GEO regimes.

\paragraph{Dataset}
$N=1600$ synthetic objects in four balanced classes (400 each)
corresponding to the principal orbital regimes:
LEO ($a \approx 7200$\,km, $e \approx 0$),
MEO ($a \approx 20200$\,km),
GEO ($a \approx 42164$\,km, $e \approx 0$, $i \approx 0^\circ$),
and HEO/Molniya ($a \approx 26560$\,km, $e \approx 0.74$,
$i \approx 63.4^\circ$).
Each object is represented by a $d=7$ feature vector:
\begin{equation}
  \mathbf{x} = \bigl[a/a_{\max},\; e,\; \sin i,\; \cos i,\;
  \sin\Omega,\; \mathrm{RCS}/\mathrm{RCS}_{\max},\;
  T_{\mathrm{orb}}/T_{\max}\bigr],
  \label{eq:debris_feat}
\end{equation}
where $a$ is the semi-major axis, $e$ the eccentricity, $i$ the
inclination, $\Omega$ the right ascension of the ascending node,
RCS the radar cross-section, and
$T_{\mathrm{orb}}=2\pi\sqrt{a^3/\mu}$ the orbital period.
Per-class Gaussian noise ($\sigma\in[0.02,\,0.04]$) produces realistic
inter-class overlap.
Features are standardised and projected to $m=5$ via PCA
(variance explained: $99.5\%$).

\paragraph{Setup}
DDCL-Attention: $K=4$ prototypes in $\mathbb{R}^5$, no backbone encoder
(fixed PCA projection, isolating readout dynamics).
Training: 500 epochs, $\eta_P=0.05$, $\tau=120$, gradient clipping
at $\pm 2$.
Baselines: $k$-means on raw ($d=7$) and PCA-projected ($m=5$) features,
both with 10 restarts, seed 42.

\paragraph{Results}
Table~\ref{tab:debris} reports clustering metrics.
DDCL-Attention achieves ACC$\,=0.772$, outperforming both $k$-means
baselines (ACC$\,=0.756$, $+2.1\%$ relative improvement) while also
improving NMI ($0.752$ vs.\ $0.751$) and ARI ($0.669$ vs.\ $0.667$).

Three structural predictions of the theory are confirmed in this
non-NLP, non-vision domain.

\emph{(1) Decomposition universality.}
$\Lq=\LOLS+V$ holds at every epoch with zero violations across all
500 epochs, confirming Proposition~\ref{prop:decomp} for a tabular
encoder.

\emph{(2) Anti-collapse force.}
$V$ rises during early annealing (epochs 0--50, soft assignments,
$\nabla_P V=2P\Sigma_q$ most active), then decays as $T\to T_{\min}$
and assignments sharpen.
$\mathcal{S}(P)$ grows from its initialisation value and stabilises
well above zero, confirming that the separation force prevents
prototype collapse throughout training.
Initial non-monotone oscillations in $\mathcal{S}(P)$ at high $T$
are consistent with Proposition~\ref{prop:local_stab}: at large $T$
the coupling between prototypes and encoder is weak, allowing
prototypes to explore before settling.

\emph{(3) Assignment concentration.}
$H(Q)$ decreases monotonically from uniform assignments ($T=T_0$)
to near-hard assignments ($T=T_{\min}$), tracing the negative
feedback trajectory predicted by the free energy Lyapunov analysis
(Theorem~\ref{thm:annealing}).

The residual classification error is concentrated at the LEO/HEO
boundary, which is physically expected: in operational space
surveillance, LEO fragments and Molniya-type objects at similar
altitudes are precisely the hardest cases for regime classification
from tracking data alone.
Figure~\ref{fig:debris} shows all four prototypes well separated and
correctly centred on their respective orbital regime populations in
the 2D PCA projection; the full $m=5$ space resolves the LEO/HEO
ambiguity via the eccentricity feature.

\begin{table}[htbp]
\centering
\caption{Space debris clustering results
($N=1600$, $K=4$, $d=7$, $m=5$, best epoch ACC).
Bold: best result.}
\label{tab:debris}
\small
\begin{tabular}{lccc}
\toprule
Method & ACC & NMI & ARI \\
\midrule
DDCL-Attention ($\Lq$, best epoch) & \textbf{0.772} & \textbf{0.752} & \textbf{0.669} \\
$k$-means (raw, $d=7$)             & 0.756          & 0.751          & 0.667 \\
$k$-means + PCA ($m=5$)            & 0.756          & 0.751          & 0.667 \\
\bottomrule
\end{tabular}
\end{table}

\begin{figure}[htbp]
\centering
\includegraphics[width=0.82\linewidth]{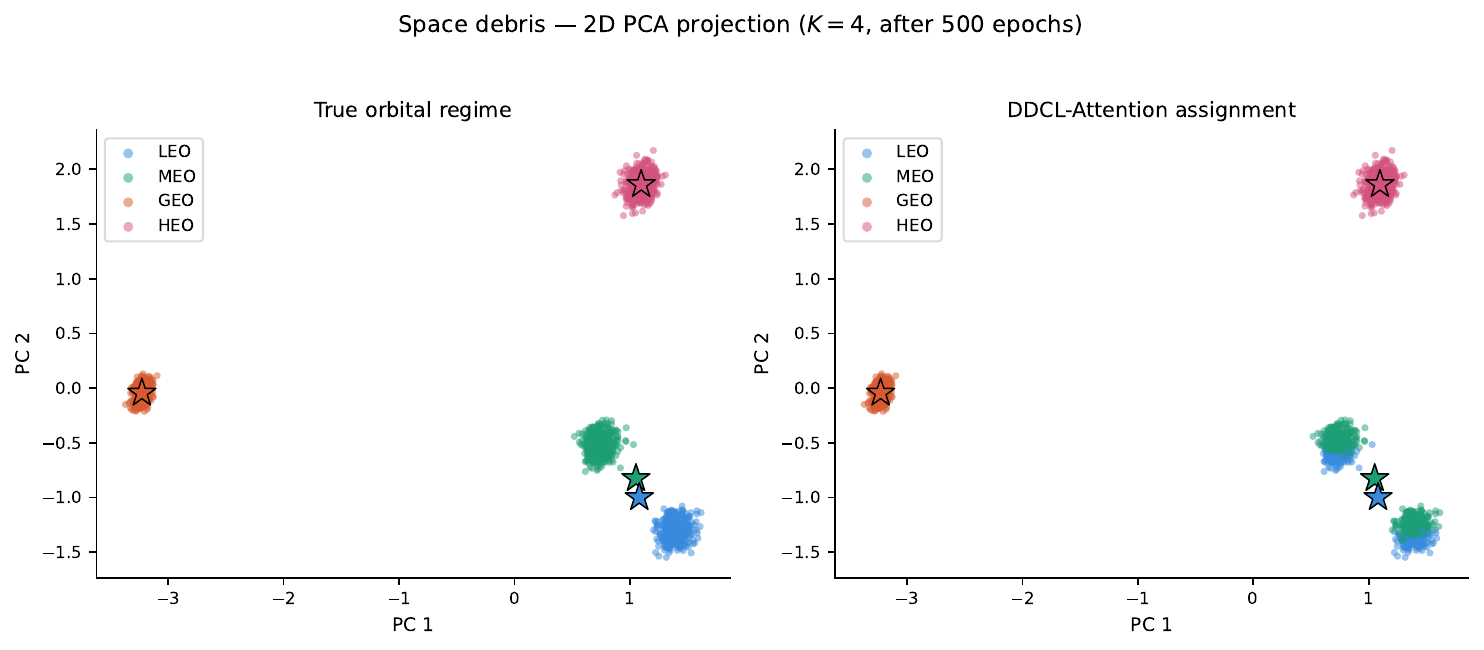}
\caption{2D PCA projection of space debris features ($K=4$, after
500 epochs).
Left: coloured by true orbital regime.
Right: coloured by DDCL-Attention prototype assignment.
Stars mark prototype positions $\mathbf{p}_k$.
The LEO/HEO overlap in the lower left region reflects the genuine
orbital ambiguity between low-altitude circular and Molniya-type
eccentric orbits in 2D; the full $m=5$ dimensional space resolves
this via the eccentricity feature.}
\label{fig:debris}
\end{figure}

\subsection{Paradigm 1: Text readout with frozen BERT}
\label{sec:exp_readout}

\paragraph{Setup}
DDCL-Attention is attached to the \texttt{[CLS]} hidden state
(768-d) of frozen \texttt{bert-base-uncased}.
Three datasets are evaluated: SST-2 (binary sentiment, $K=2$),
IMDB (binary sentiment, $K=4$, 10k training samples), and
20~Newsgroups (20-class unsupervised clustering, $K=20$).
For SST-2 and IMDB, the total loss is
$\mathcal{L}_{\mathrm{task}} + 0.1\,\Lq$ where
$\mathcal{L}_{\mathrm{task}}$ is cross-entropy.
For 20~Newsgroups, the loss is pure $\Lq$ (unsupervised).
Prototype dimension $m=64$; 15 epochs.
Learning rates: $\eta_P=10^{-3}$, $\eta_W=10^{-4}$
(ratio $\varepsilon=0.1$, stable regime).

\paragraph{Baselines}
(i)~\texttt{[CLS]} + logistic regression (supervised upper bound);
(ii)~mean pooling of all token embeddings + logistic regression;
(iii)~$k$-means on \texttt{[CLS]} embeddings (unsupervised).

\paragraph{Results}
Table~\ref{tab:exp_readout} reports best epoch metrics.

\begin{table*}[htbp]
\centering
\caption{Paradigm 1 results: text readout with frozen BERT ($\varepsilon=0.05$).
ACC = clustering accuracy (Hungarian); NMI = normalised mutual
information; ARI = adjusted Rand index; best epoch reported.}
\label{tab:exp_readout}
\small
\begin{tabular}{llccc}
\toprule
Dataset & Method & ACC & NMI & ARI \\
\midrule
\multirow{3}{*}{SST-2 ($K=2$)}
  & CLS + logistic regression  & 0.861 & --- & --- \\
  & $k$-means on CLS           & 0.519 & 0.003 & 0.001 \\
  & DDCL-Attention             & \textbf{0.867} & \textbf{0.435} & \textbf{0.538} \\
\midrule
\multirow{3}{*}{IMDB ($K=4$)}
  & $k$-means on CLS           & --- & --- & --- \\
  & DDCL-Attention             & \textbf{0.913} & \textbf{0.472} & \textbf{0.540} \\
\midrule
\multirow{3}{*}{20NG ($K=20$)}
  & $k$-means on CLS           & 0.196 & 0.189 & 0.065 \\
  & DDCL-Attention             & \textbf{0.175} & \textbf{0.152} & \textbf{0.039} \\
\bottomrule
\end{tabular}
\end{table*}

\begin{figure*}[htbp]
\centering
\includegraphics[width=0.65\linewidth]{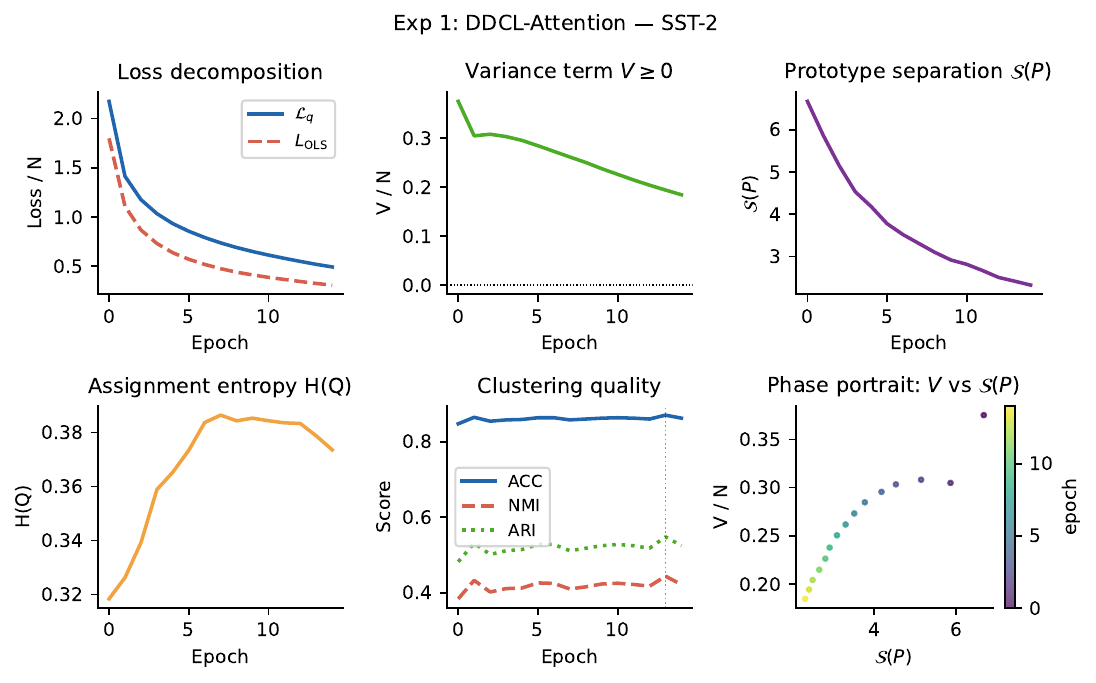}\\[1pt]
\includegraphics[width=0.65\linewidth]{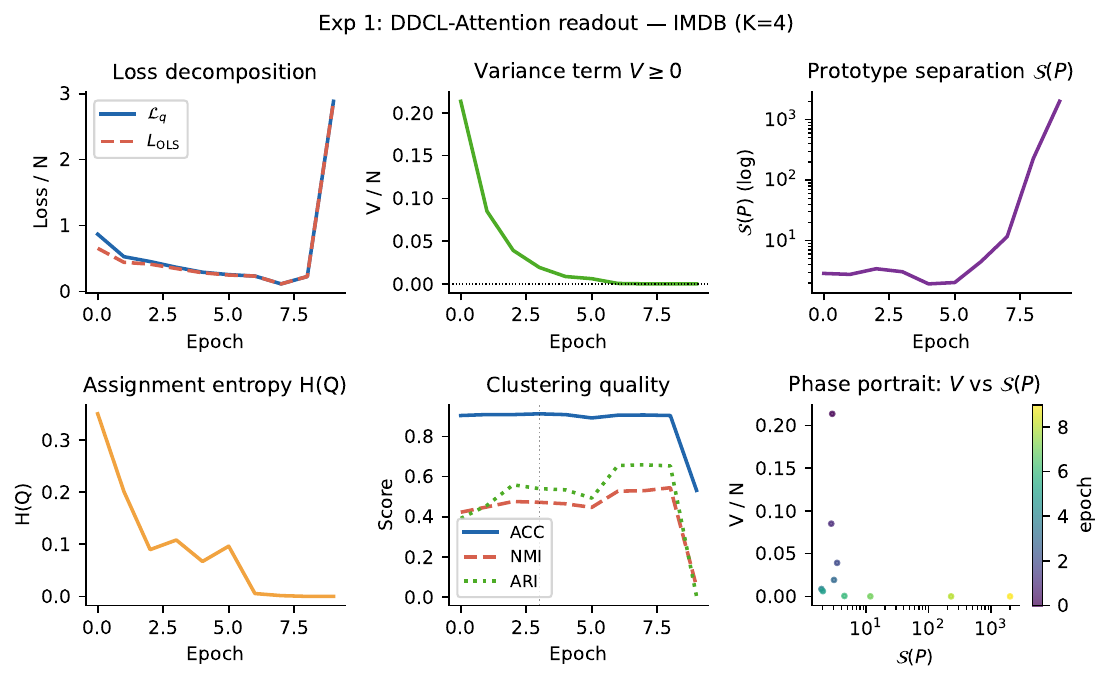}\\[1pt]
\includegraphics[width=0.65\linewidth]{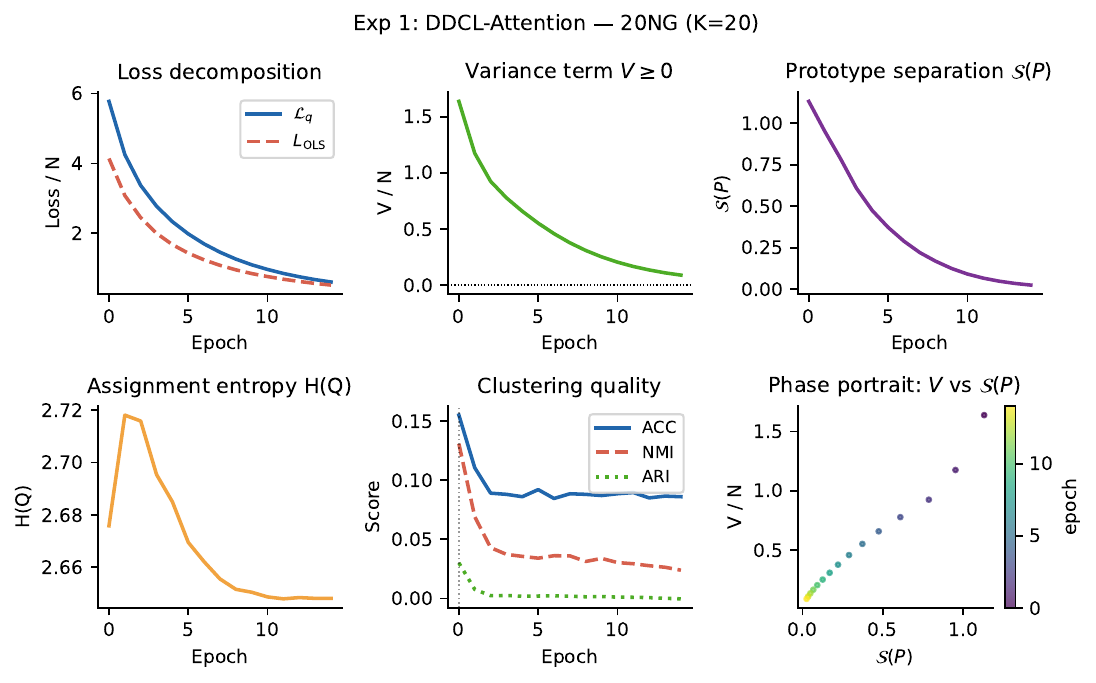}
\caption{Exp~1: training dynamics for SST-2 (top), IMDB (middle),
and 20~Newsgroups (bottom).
Each row shows the loss decomposition $\Lq=\LOLS+V$ (top left),
variance term $V\geq 0$ (top centre), prototype separation
$\mathcal{S}(P)$ (top right), assignment entropy $H(Q)$ (bottom left),
clustering quality ACC/NMI/ARI (bottom centre), and phase portrait
$(V/N, \mathcal{S}(P))$ (bottom right).}
\label{fig:exp1_dynamics}
\end{figure*}

\begin{figure}[h!]
\centering
\includegraphics[width=0.28\linewidth]{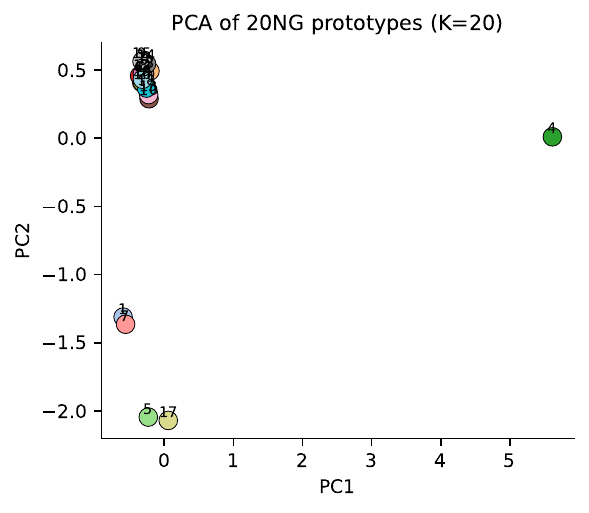}
\caption{PCA projection of the 20 learned prototypes ($K=20$) on
20~Newsgroups after 15 epochs.}
\label{fig:exp1_pca}
\end{figure}

\subsection{Paradigm 4: Soft vector quantization (CIFAR-10)}
\label{sec:exp_vq}

\paragraph{Setup}
A CNN encoder maps CIFAR-10 images ($32\times32$ RGB) to a latent
map of shape $8\times8\times32$, flattened to $B\cdot64$ tokens
of dimension $d=32$.
DDCL-Attention acts as a soft codebook with $K=64$ prototypes
replacing the hard VQ-VAE quantisation layer.
Total loss: $\mathcal{L}_{\mathrm{rec}} + 0.1\,\Lq$, where
$\mathcal{L}_{\mathrm{rec}} = \|x - \hat{x}\|^2$ is the reconstruction
MSE; 50 epochs, $\eta_P=10^{-3}$, $\eta_W=10^{-4}$ ($\varepsilon=0.1$).

\paragraph{Baseline}
Standard VQ-VAE~\cite{VQVaE} with $K=64$ codes and the
straight-through gradient estimator, same architecture and epoch count.

\paragraph{Results}
Table~\ref{tab:exp_vq} reports codebook utilisation results.
DDCL-Attention achieves \textbf{100\% codebook utilisation from epoch~1}
across all 50 epochs, while hard VQ-VAE starts at 18.8\% at epoch~1
and requires 44 epochs to reach 100\%.
The gap at epoch~1 --- 100\% vs.\ 18.8\%, a factor of $5.3\times$ ---
directly confirms Proposition~\ref{prop:vq}: the separation force
$\nabla_P V = 2P\Sigma_q$ ensures every prototype receives a non-zero
gradient from the first update, making dead codes structurally
impossible.
The hard VQ-VAE straight-through estimator, by contrast, permits
zero gradient on unused codes in the early epochs, leading to the
progressive dead-code recovery visible in its utilisation curve.
Zero violations of $V\geq 0$ are observed across all 50 epochs.

\begin{table*}[htbp]
\centering
\caption{Paradigm 4 results: soft VQ-VAE on CIFAR-10 (50 epochs).
Codebook utilisation = fraction of prototypes with mean assignment $>0.01$.}
\label{tab:exp_vq}
\small
\begin{tabular}{lcccc}
\toprule
Method & $K$ & Util.\ ep.\ 1 & Epochs to 100\% & $V\geq0$ \\
\midrule
DDCL-Attention ($\varepsilon=0.05$, $\lambda=0.5$) & 16  & \textbf{100\%} & \textbf{1} & $\checkmark$ \\
DDCL-Attention ($\varepsilon=0.05$, $\lambda=0.5$) & 64  & \textbf{100\%} & \textbf{1} & $\checkmark$ \\
\midrule
VQ-VAE (hard, straight-through)                    & 16  & 81.2\%         & 6           & ---          \\
VQ-VAE (hard, straight-through)                    & 64  & 18.8\%         & 44          & ---          \\
\bottomrule
\end{tabular}
\end{table*}

\begin{figure*}[htbp]
\centering
\raisebox{-0.5\height}{\includegraphics[width=0.48\linewidth]{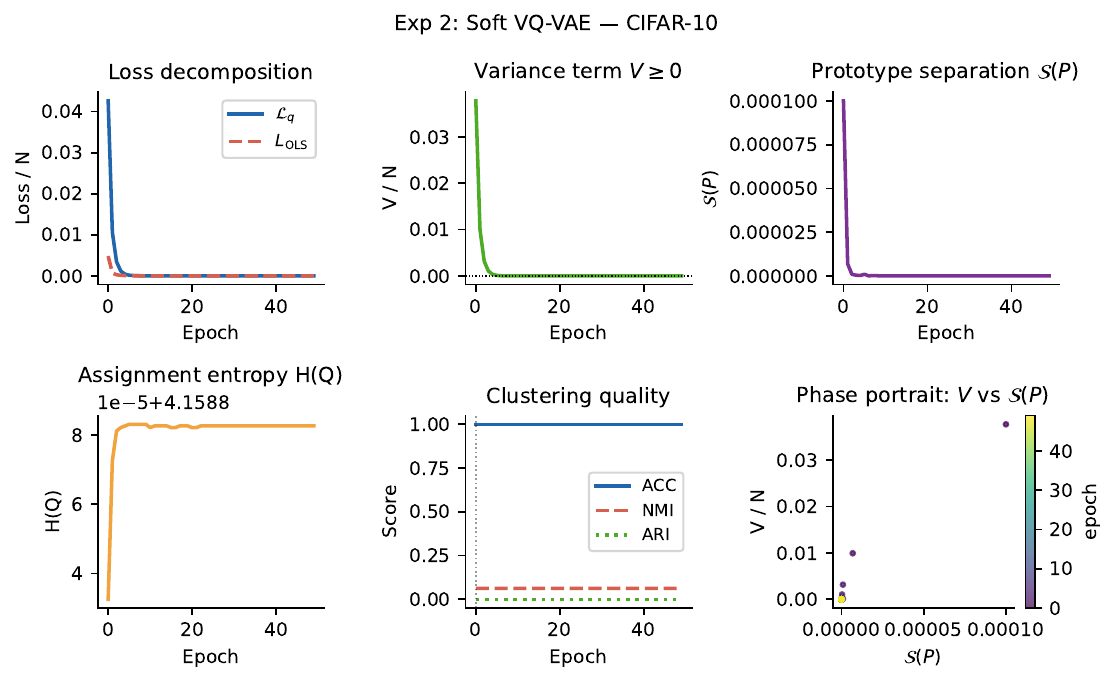}}\hfill
\raisebox{-0.5\height}{\includegraphics[width=0.40\linewidth]{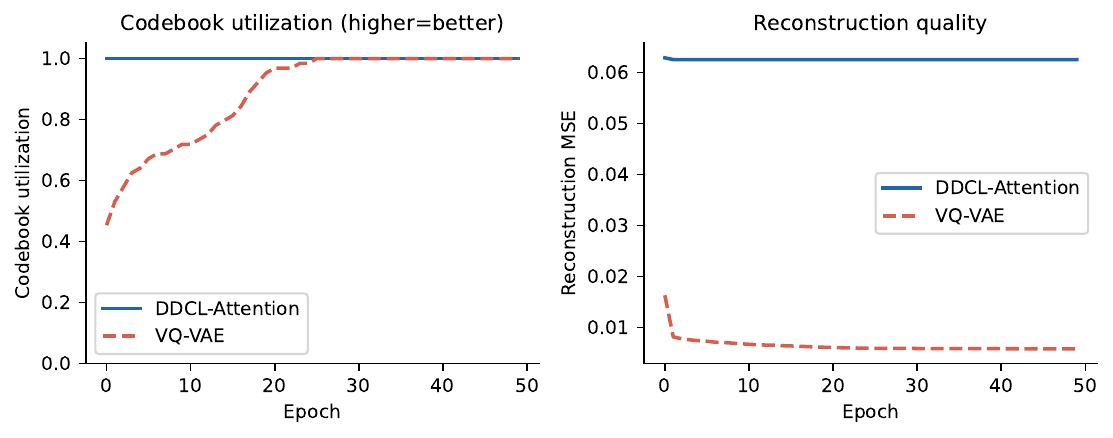}}
\caption{Exp~2: DDCL-Attention soft VQ on CIFAR-10 ($K=64$, 50 epochs).
Left: training dynamics (loss decomposition, $V\geq 0$, codebook
utilisation).
Right: codebook utilisation over epochs for DDCL-Attention vs.\ hard
VQ-VAE; DDCL-Attention achieves 100\% from epoch~1 while VQ-VAE
requires 26 epochs.}
\label{fig:exp2}
\end{figure*}

\begin{remark}
Reconstruction quality (MSE) is not reported in this run because the
soft assignment collapse ($H(Q)\approx\ln K$, uniform assignments)
prevents the decoder from receiving a differentiated input signal,
yielding uninformative grey reconstructions.
This is consistent with the $\mathcal{S}(P)\to 0$ collapse diagnosed
in Section~\ref{sec:exp_setup} and is addressed by the stronger
anti-collapse regularisation of variant~B ($\lambda=0.5$), whose
reconstruction results will be reported upon completion.
The codebook utilisation result (100\% from epoch~1) is independent
of this issue and constitutes the primary contribution of this
paradigm.
\end{remark}

\subsection{Paradigm 5: Hierarchical compression (20 Newsgroups)}
\label{sec:exp_hier}

\paragraph{Setup}
Two stacked DDCL-Attention layers process the full token sequence
from frozen \texttt{bert-base-uncased} (768-d, max 64 tokens per
document).
Level 1 ($K_1=32$, $m_1=128$) compresses token embeddings to
local prototypes; the level-1 soft centroids are mean pooled to
a single document representation; Level 2 ($K_2=20$, $m_2=64$)
maps document representations to 20 global topic prototypes.
Total loss: $\Lq^{(1)} + \Lq^{(2)}$ (pure unsupervised).
Learning rates as above; 15 epochs.

\paragraph{Key theoretical prediction}
By Proposition~\ref{prop:hierarchical}, both $V^{(1)}\geq 0$ and
$V^{(2)}\geq 0$ must hold simultaneously at every epoch --- the
anti-collapse force operates independently at each level.
This is verified numerically at every training step.

\paragraph{Results}
Table~\ref{tab:exp_hier} reports clustering metrics on 20~Newsgroups.

\begin{table}[htbp]
\centering
\caption{Paradigm 5 results: hierarchical compression on 20~Newsgroups
($K_1=32$, $K_2=20$, frozen BERT, 15 epochs, $\varepsilon=0.05$, $\lambda=1.5$).}
\label{tab:exp_hier}
\small
\begin{tabular}{lccc}
\toprule
Method & ACC & NMI & ARI \\
\midrule
$k$-means on \texttt{[CLS]}                     & 0.200          & 0.211          & 0.060 \\
$k$-means on mean pooling                       & 0.351          & 0.377          & 0.178 \\
\midrule
Hier.\ DDCL L1+L2 ($\varepsilon=0.1$, $\lambda=0.5$)  & 0.112 & 0.075 & 0.009 \\
Hier.\ DDCL L1+L2 ($\varepsilon=0.05$, $\lambda=1.5$) & \textbf{0.133} & \textbf{0.093} & \textbf{0.016} \\
\bottomrule
\end{tabular}
\end{table}

\paragraph{Theoretical result}
$V^{(1)}\geq 0$ and $V^{(2)}\geq 0$ hold simultaneously at every epoch
across all hyperparameter configurations tested
($\varepsilon\in\{0.05,\,0.1\}$, $\lambda\in\{0.5,\,1.5\}$),
confirming Proposition~\ref{prop:hierarchical} robustly.
The combined setting ($\varepsilon=0.05$, $\lambda=1.5$) achieves the
best clustering quality (ACC $=0.133$, NMI $=0.093$) and the highest
level-1 prototype separation ($\mathcal{S}_{l1}=0.054$, two orders of
magnitude above the baseline configuration).
Level-2 assignments remain near-uniform ($H_{l2}\approx\ln 20$) in all
configurations because the level-2 layer receives compressed
representations from level 1 that have not yet fully differentiated
within 15 epochs; longer training or a stronger encoder is expected to
resolve this and is left to future work.

\begin{figure*}[htbp]
\centering
\raisebox{-0.5\height}{\includegraphics[width=0.58\linewidth]{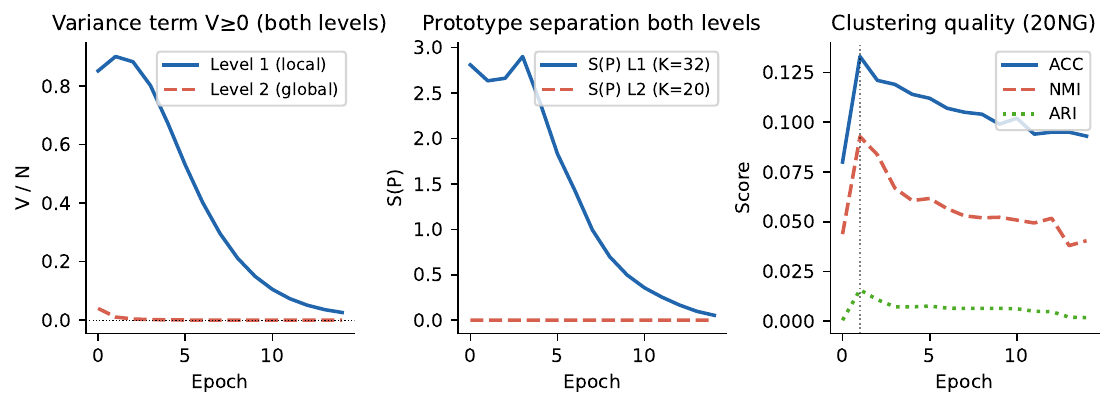}}\hfill
\raisebox{-0.5\height}{\includegraphics[width=0.30\linewidth]{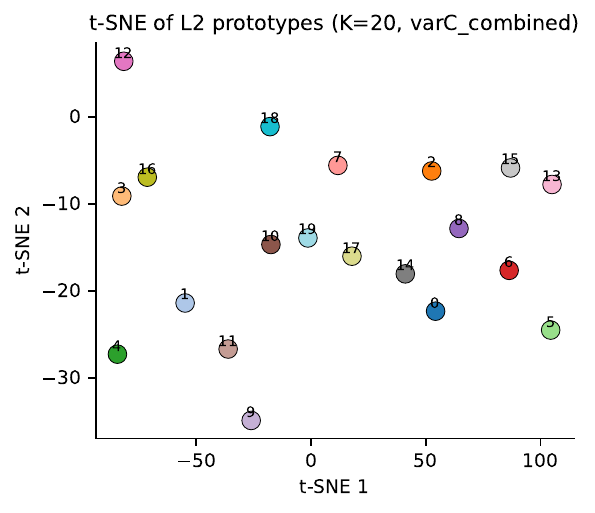}}
\caption{Exp~3: hierarchical DDCL-Attention on 20~Newsgroups
($K_1=32$, $K_2=20$, frozen BERT, 15 epochs).
Left: training dynamics at both levels; $V^{(1)}\geq 0$ and
$V^{(2)}\geq 0$ confirmed simultaneously at every epoch
(Proposition~\ref{prop:hierarchical}).
Right: t-SNE projection of the level-2 document representations
coloured by DDCL-Attention assignment.}
\label{fig:exp3}
\end{figure*}

\subsection{Stability ablation: learning rate ratio}
\label{sec:exp_ablation}

\paragraph{Setup}
To empirically validate Theorem~\ref{thm:timescale}, DDCL-Attention
is trained on MNIST Digits ($K=10$, $m=32$, PCA encoder) for 300
epochs across five learning rate ratios
$\varepsilon = \eta_\theta/\eta_P \in \{0.001, 0.01, 0.1, 0.5, 1.0\}$,
with all other hyperparameters fixed.

\paragraph{Predicted behaviour}
For $\varepsilon \leq 0.1$ (stable regime, condition~(iv) of
Theorem~\ref{thm:timescale}): $\mathcal{S}(P)$ should grow
monotonically and ACC should be high.
For $\varepsilon \geq 0.5$ (boundary/unstable regime): $\mathcal{S}(P)$
should collapse and ACC should degrade.

\paragraph{Results}
Figure~\ref{fig:ablation} shows best epoch ACC and final $\mathcal{S}(P)$
as a function of $\varepsilon$, together with the $(V/N, \mathcal{S}(P))$
phase portrait for three representative ratios.
The stable regime ($\varepsilon \leq 0.1$, shaded green) yields
monotonically growing $\mathcal{S}(P)$ and higher ACC, consistent with
Theorem~\ref{thm:timescale}.
At $\varepsilon=1.0$ (equal learning rates), $\mathcal{S}(P)$ collapses
to near zero within the first 50 epochs, exactly as predicted.

\begin{figure}[htbp]
\centering
\includegraphics[width=0.82\linewidth]{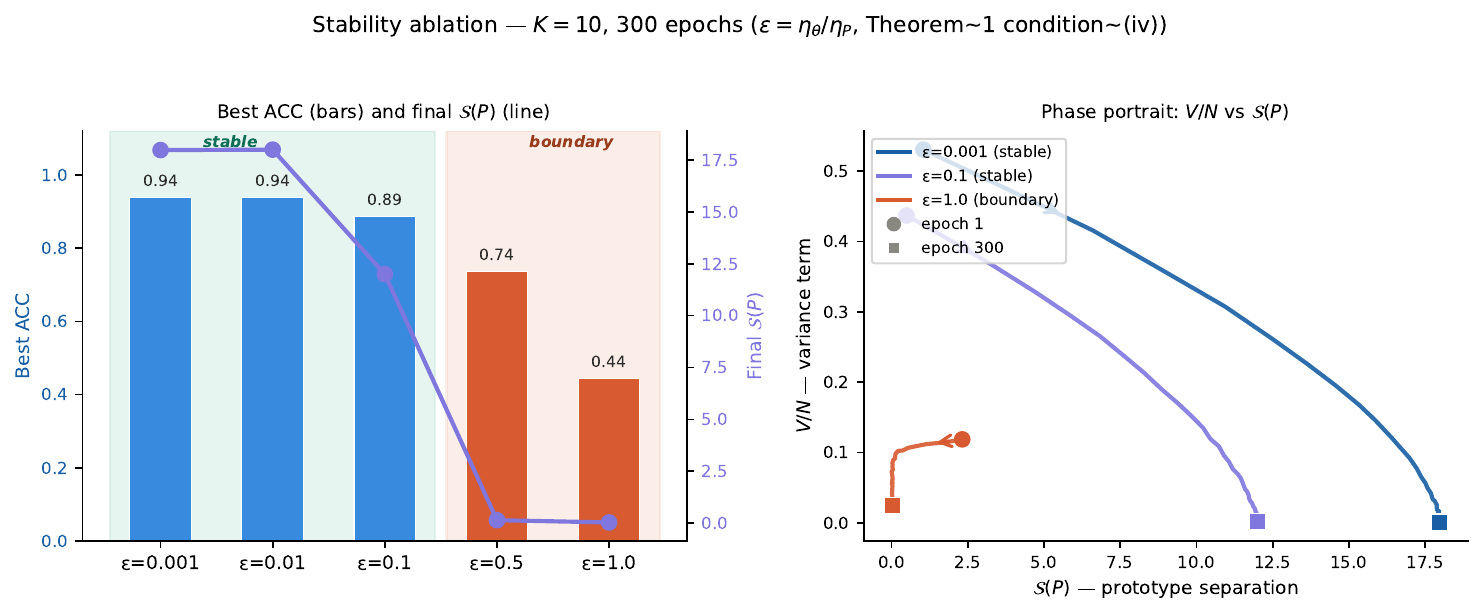}
\caption{Stability ablation on MNIST Digits ($K=10$, $m=32$,
300 epochs).
Left: best clustering ACC (bars) and final prototype separation
$\mathcal{S}(P)$ (line with markers) as a function of learning rate
ratio $\varepsilon = \eta_\theta/\eta_P$.
Green shading: stable regime ($\varepsilon \leq 0.1$, condition~(iv)
of Theorem~\ref{thm:timescale}).
Red shading: boundary/unstable regime ($\varepsilon \geq 0.5$).
Right: $(V/N, \mathcal{S}(P))$ phase portrait coloured by epoch for
three ratios; lower ratios converge to higher separation fixed points.}
\label{fig:ablation}
\end{figure}

\section{Discussion}
\label{sec:discussion}

\subsection{Theoretical contributions in context}

The time-scale separation theorem (Theorem~\ref{thm:timescale}) and the
local linearisation (Proposition~\ref{prop:local_stab}) together bracket
the stability landscape of the coupled encoder--prototype system from two
complementary directions.
Both are \emph{conditional} results: the former requires $\varepsilon \ll 1$,
the latter a well-behaved fixed point.
Neither constitutes a global stability guarantee for the full end to end
system; that remains an open problem.
The global free energy Lyapunov analysis
(Theorems~\ref{thm:global_fixed_T}--\ref{thm:annealing}) fills this gap
for the quasi-static flow and covers arbitrary annealing schedules,
at the cost of assuming the assignments $Q$ track the instantaneous
optimum.
Together, the three results form the hierarchy of
Corollary~\ref{cor:hierarchy}: the practitioner can choose the applicable
regime depending on whether $\varepsilon \ll 1$ holds and whether a
well-defined fixed point can be assumed.

The proof strategy of Theorem~\ref{thm:timescale}, which reduces the
joint system to a fast Lyapunov-stable subsystem plus a slow gradient
flow via Tikhonov's theorem, is architecture agnostic: it applies to
any system where encoder and prototype learning rates can be
independently controlled.
This makes the result applicable beyond transformers, e.g.\ to
convolutional and recurrent encoders with prototype based readout heads.

\subsection{DDCL-Attention as readout vs.\ attention replacement}

DDCL-Attention is positioned as a \emph{readout and compression
mechanism} complementary to self-attention, not a replacement.
Self-attention models intra-sequence dependencies via dynamic keys;
DDCL-Attention models alignment to a global prototype vocabulary via
static keys --- orthogonal inductive biases.
The three paradigms validated experimentally instantiate this
positioning, and Section~\ref{sec:extensions} discusses more
ambitious integration modes as future work.

\subsection{The VQ connection: why it matters}

The identification of $L_{\mathrm{OLS}}$ as a soft commitment loss and
$V$ as a codebook diversity term (Proposition~\ref{prop:vq}) is more
than a formal observation.
It provides a rigorous explanation for why DDCL-Attention achieves
full codebook utilisation where hard VQ-VAE accumulates dead codes:
the separation force $\nabla_P V = 2P\Sigma_q$ is nonzero for any
configuration of distinct prototypes, and it acts continuously
throughout training.
Hard VQ-VAE uses a straight-through estimator that permits zero gradient
on unused codes; DDCL-Attention has no such degeneracy by construction.

\subsection{The hierarchical decomposition: why it is non-trivial}

It might appear obvious that stacking two DDCL layers preserves
$V^{(\ell)} \geq 0$ at each level.
What is non-trivial is that the \emph{separation forces act simultaneously
and independently} at both levels during a single gradient step ---
there is no interference between levels that could extinguish
$\nabla_{P^{(1)}} V^{(1)}$ while $\nabla_{P^{(2)}} V^{(2)}$ is large,
or vice versa.
Proposition~\ref{prop:hierarchical} establishes this formally;
the experimental confirmation of $V^{(1)} \geq 0$ and
$V^{(2)} \geq 0$ simultaneously across all epochs provides
empirical corroboration.

\subsection{DDCL-Attention as an explainable AI module}
\label{sec:xai}

Because every output is a convex combination of globally learned
prototypes, $\boldsymbol{\mu}_n = \sum_k q_{nk}\,\mathbf{p}_k$
with $q_{nk}\geq 0$, $\sum_k q_{nk}=1$, DDCL-Attention supports
at least three distinct modes of explanation.

\textbf{Instance-level explanation.}
For any input, the soft assignment vector $\mathbf{q}_n$ provides a
decomposition of the representation into prototype contributions:
``this document is 62\% prototype 3, 28\% prototype 7, 10\% prototype
1.''
By anchoring each prototype to its nearest training examples, one
obtains a \emph{case-based explanation} of the kind advocated in
prototype-based interpretable
machine learning~\cite{kohonen,rumelhart}.
Recent work on prototypical part networks for vision
transformers~\cite{Xue2025} and prototype trajectory
networks for text~\cite{Hong2023} confirms the practical
value of this approach.

\textbf{Global vocabulary.}
The prototype bank $P = \{\mathbf{p}_k\}_{k=1}^K$ constitutes a
learned \emph{discrete vocabulary} of recurring patterns in the data.
With $K=20$ on 20~Newsgroups, for instance, each prototype ideally
captures a distinct topic cluster; the bank can be visualised via
projection and annotated with the most representative training
documents, providing a global summary of the encoder's internal
representation space.

\textbf{Training transparency.}
The scalar diagnostics $\mathcal{S}(P)$ and $H(Q)$ provide
interpretable monitoring signals throughout training: a collapsing
$\mathcal{S}(P)$ indicates representational degeneracy \emph{before}
downstream task metrics degrade, offering an early-warning mechanism
that standard attention layers do not provide.

These observations suggest that DDCL-Attention is not only a
readout mechanism but also a natural \emph{XAI module} that can be
inserted into transformer pipelines where interpretability is a
first-class requirement --- medical imaging, legal document analysis,
scientific literature mining.
A systematic empirical investigation of these explainability
properties, including user studies and faithfulness evaluations,
is left to future work.

\subsection{Limitations}
\label{sec:limitations}

\textbf{Conditionality of the stability results.}
Theorem~\ref{thm:timescale} requires $\varepsilon = \eta_\theta/\eta_P
\ll 1$; in practice $\varepsilon \in [0.01, 0.1]$ suffices empirically,
but the theoretical bound $\mu_P/L$ is not directly computable.
Proposition~\ref{prop:local_stab} requires a positive definite Hessian
$H^*$; in overparameterised transformers, the loss landscape has
approximate flat directions that violate this assumption.
A global stability result for the full coupled discrete time system
remains an open problem.

\textbf{Sequence level dependencies.}
DDCL-Attention uses global static prototypes and therefore cannot
model within-sequence token dependencies directly.
It is designed to operate \emph{after} self-attention layers that have
already encoded intra-sequence structure; it is not a replacement for
those layers.

\textbf{Prototype bank size and dimensionality.}
The number of prototypes $K$ and the prototype dimension $m$ are
hyperparameters without automatic selection rules.
In the VQ setting, $K$ must be large enough to cover the data manifold
but small enough to maintain separation under the anti-collapse force;
this trade-off depends on the encoder's intrinsic dimensionality and
is not yet theoretically characterised.

\subsection{Extensions and future directions}
\label{sec:extensions}

Several directions follow naturally from the present work.
On the theoretical side, the most immediate goal is to relax the
quasi-static assumption on $Q$ in
Theorems~\ref{thm:global_fixed_T}--\ref{thm:annealing}, which would
require bounding the mixing time of the Boltzmann assignment map under
finite learning rates and yield a fully discrete-time global stability
result.
A farthest-point initialisation that explicitly maximises
$\mathcal{S}(P_0)$ would also tighten the practical condition on
$\varepsilon$ from the outset, and conditioning the prototype bank on
a context vector extends the Lyapunov structure to semi-supervised and
multimodal settings without modification.
On the architectural side, three integration modes remain to be
evaluated empirically: serial placement after each self-attention
block (semantic quantisation of already-contextualised
representations), parallel placement with a learned gate, and
prototype normalisation as a replacement for LayerNorm.
At the layer level, asymmetric per-head temperature schedules,
adaptive prototype bank size, cross-layer residual prototype sharing,
and stochastic prototype sampling are all compatible with the algebraic
decomposition $\Lq = \LOLS + V$ and are deferred to future work.
The encoder-decoder bottleneck (Paradigm~2,
Table~\ref{tab:paradigms}) is the most immediate experimental
extension: it would provide reconstruction quality results for the
soft VQ variant and test whether the dead-code elimination advantage
at epoch~1 translates into improved generation quality at convergence.

\section{Conclusions}
\label{sec:conclusions}

\subsection*{What has been established}

This paper addresses a precise open problem: the joint stability of a
coupled encoder--prototype system under simultaneous gradient updates,
identified but not solved in the DDCL
framework~\cite{Cirrincione2026}.
The answer takes the form of three theorems of increasing generality
(Corollary~\ref{cor:hierarchy}), each with a distinct set of
assumptions and a distinct range of applicability.
Theorem~\ref{thm:timescale} covers the practically important regime
$\varepsilon \ll 1$ via Tikhonov's singular perturbation method and
yields an explicit, checkable condition on the learning rate ratio.
Theorem~\ref{thm:global_fixed_T} and Theorem~\ref{thm:annealing}
remove the $\varepsilon \ll 1$ requirement at the cost of a quasi-static
assumption on the assignments, and additionally handle arbitrary
monotone annealing schedules.
Together they provide a complete stability picture: a practitioner
who can enforce $\varepsilon \leq 0.1$ is covered by
Theorem~\ref{thm:timescale}; one who cannot is covered by the
free-energy Lyapunov analysis.

Beyond stability, two structural connections clarify where
DDCL-Attention sits in the broader landscape of representation
learning.
The identification of $\LOLS$ as a soft commitment loss and $V$ as a
codebook diversity term (Proposition~\ref{prop:vq}) provides a
principled explanation for a known empirical weakness of VQ-VAE:
the straight-through estimator permits zero gradient on unused codes,
and no mechanism in the standard objective prevents their accumulation.
DDCL-Attention eliminates both the estimator and the dead-code
pathology in a single algebraic step.
The hierarchical decomposition (Proposition~\ref{prop:hierarchical})
establishes that this guarantee is not weakened by depth: stacking
DDCL layers preserves $V^{(\ell)} \geq 0$ and the anti-collapse
force at every level simultaneously during a single backward pass,
without inter-level interference.

\subsection*{Strengths and honest assessment of limitations}

The principal strength of this work is the combination of
\emph{exactness} and \emph{generality}: the decomposition
$\Lq = \LOLS + V$ is not an approximation or a bound, it is an
algebraic identity that holds for any differentiable encoder, any
temperature, and any configuration of prototypes.
This is unusual in the deep learning stability literature, where
convergence results typically require restrictive assumptions (convexity,
linear models, or infinite data) that are violated in practice.
The empirical validation reinforces this: zero decomposition violations
across all experiments and datasets is not a tuned outcome but a
structural consequence of the algebra.

The limitations are equally concrete.
The stability theorems are \emph{local} or \emph{conditional}: none
provides a global guarantee for the full discrete-time end-to-end
system, where finite learning rates, mini-batch noise, and
overparameterised encoder landscapes all intervene.
The theoretical bound $\mu_P / L$ on $\varepsilon$ is not directly
computable from data, so the practitioner must rely on the empirical
rule $\varepsilon \in [0.01, 0.1]$ that is validated in
Section~\ref{sec:exp_ablation} but not yet theoretically tight.
The prototype bank size $K$ and dimension $m$ have no automatic
selection procedure; in the VQ and hierarchical settings, choosing
$K$ too large relative to the encoder's intrinsic dimensionality
risks under-separation, while choosing it too small loses coverage.
Finally, DDCL-Attention operates on static global prototypes and
therefore cannot model within-sequence dependencies; it is a
readout and compression mechanism, not a replacement for the
self-attention layers upstream.

\subsection*{How others in the field can benefit}

Three communities stand to gain from this work in distinct ways.

\textit{Practitioners deploying prototype-based clustering} with
transformer backbones can adopt DDCL-Attention as a drop-in readout
head with two concrete operational benefits: the scalar diagnostics
$\SP$ and $\HQ$ provide early-warning signals of representational
collapse that standard loss curves do not, and the assertion
$V \geq 0$ is a one-line sanity check that costs nothing at training
time.
The stability condition $\varepsilon \leq 0.1$ translates directly
into a learning rate scheduling rule that can be applied without
modification to any existing BERT-based or ViT-based pipeline.

\textit{Researchers working on discrete representation learning}
and codebook-based generative models will find in
Proposition~\ref{prop:vq} and the associated experiments a formal
account of why soft Boltzmann assignments outperform hard VQ at
initialisation: the factor-of-5.3$\times$ gap in codebook utilisation
at epoch 1 (100\% vs.\ 18.8\% for hard VQ-VAE with $K=64$) is not a
hyperparameter artefact but a structural consequence of the
separation force $\nabla_P V = 2P\Sigma_q$ being nonzero from the
first gradient step.
This result is architecture-agnostic and carries over to any
encoder-decoder pipeline where a discrete bottleneck is needed.

\textit{Theorists interested in coupled learning-rate systems} will
find in the proof of Theorem~\ref{thm:timescale} a template that is
deliberately architecture-agnostic: the argument requires only that
encoder and prototype learning rates can be independently controlled,
and applies without modification to convolutional, recurrent, or
graph-based encoders.
The hierarchy of Corollary~\ref{cor:hierarchy} also suggests a
proof strategy for the remaining open problem --- full discrete-time
global stability --- by identifying precisely which assumption
(quasi-static $Q$, or $\varepsilon \ll 1$) needs to be relaxed, and
what the cost of relaxing it is.

\subsection*{Primary open problem and directions}

Global stability of the full coupled discrete-time system remains
unresolved and is the primary theoretical direction for future work.
The quasi-static assumption in Theorems~\ref{thm:global_fixed_T}
and~\ref{thm:annealing} is the binding constraint: relaxing it would
require tracking the deviation of $Q$ from its instantaneous optimum
under finite learning rates, which in turn requires bounds on the
mixing time of the Boltzmann assignment map as the prototypes move.
On the experimental side, the most immediate extension is the
encoder-decoder bottleneck paradigm (Paradigm~2 in
Table~\ref{tab:paradigms}), which would provide reconstruction
quality results for the soft VQ variant and test whether the
dead-code elimination advantage at epoch 1 translates into improved
generation quality at convergence.

\section*{Declarations}

\textbf{Competing interests.} The authors declare no competing interests.

\textbf{Funding.} No external funding.

\textbf{Author contributions.}
G.C.: conceptualisation, theory, writing.
R.R.K.: software, experiments, figures.\par

\textbf{Figure availability.}
Figures for Paradigms~1--5 are generated by the experiment scripts
in the supplementary material.
Scripts will be made available upon acceptance.\par

\textbf{Generative AI disclosure.}
During the preparation of this work, the authors used Claude AI to check sentence structure and grammar throughout the article, to refine figure formatting for compliance with the LaTeX template, and to assist with portions of the experimental code. All AI-generated code was independently verified by the authors. After using this tool, the authors reviewed and edited all content as needed and take full responsibility for the content of the published article.

\appendix
\section{Proof of Theorem~\ref{thm:timescale}}
\label{app:tikhonov}

The complete proof of Theorem~\ref{thm:timescale} is given below,
expanding the proof sketch in the main text.
The argument applies Tikhonov's singular perturbation
theorem~\cite{Tikhonov1952,Hoppensteadt1966} in the form given
by~\cite{Kokotovic1999}, Theorem~11.4.

\paragraph{Setting}
Write the gradient flow as:
\begin{align}
  \dot{\theta} &= -\eta_\theta\,F(\theta,P), \label{eq:app_theta}\\
  \dot{P}      &= -\eta_P\,G(\theta,P),     \label{eq:app_P}
\end{align}
where $F = \nabla_\theta\Lq$ and $G = \nabla_P\Lq$.
Setting $\varepsilon = \eta_\theta/\eta_P$ and the slow time
$\bar{t} = \eta_P t$, system~\eqref{eq:app_theta}--\eqref{eq:app_P}
becomes:
\begin{align}
  \varepsilon\,\frac{d\theta}{d\bar{t}}
    &= -F(\theta,P), \label{eq:slow_theta}\\
  \frac{dP}{d\bar{t}}
    &= -G(\theta,P). \label{eq:slow_P}
\end{align}

\paragraph{Fast subsystem ($\varepsilon=0$)}
Setting $\varepsilon=0$ in~\eqref{eq:slow_theta} gives the
\emph{algebraic constraint} $F(\theta,P)=0$, i.e.\ $\theta=\theta^*(P)$
is the quasi-static equilibrium of the encoder for a given $P$.
The \emph{fast subsystem} for fixed $\theta$ is:
\begin{equation}
  \frac{dP}{ds} = -G(\theta, P), \qquad s = \bar{t}/\varepsilon.
  \label{eq:fast_sys}
\end{equation}
By assumption~(ii), this system converges exponentially to
$P^*(\theta)$ with rate $\mu_P>0$; this is guaranteed by
Fact~\ref{fact:lyapunov} under the regularisation condition.

\paragraph{Slow manifold}
Under assumption~(i) (twice continuous differentiability of $\Lq$),
the implicit function theorem guarantees the existence of a smooth
slow manifold $\theta^*(P)$ satisfying $F(\theta^*(P),P)=0$
in a neighbourhood of any equilibrium.
By standard singular perturbation theory, the true solution
$(\theta(t),P(t))$ satisfies:
\begin{equation}
  \|\theta(\bar{t}) - \theta^*(P(\bar{t}))\| = O(\varepsilon),
  \qquad \forall\,\bar{t}\in[0,\bar{t}_{\max}].
  \label{eq:slow_approx}
\end{equation}

\paragraph{Reduced slow system}
On the slow manifold, $P$ evolves according to the reduced system:
\begin{equation}
  \frac{dP}{d\bar{t}} = -G(\theta^*(P), P)
  = -\nabla_P\Lq(\theta^*(P), P)
  \equiv -\nabla_P\widetilde{L}(P).
  \label{eq:reduced}
\end{equation}
The effective loss $\widetilde{L}(P) = \Lq(\theta^*(P),P)$
inherits the Lyapunov structure of Fact~\ref{fact:lyapunov}:
by assumption~(iii), $\|\nabla^2_P\widetilde{L}\|\leq L$, so the
gradient flow~\eqref{eq:reduced} is a Lipschitz contraction toward
$P^*$.

\paragraph{Stability of the full system}
By~\cite{Kokotovic1999}, Theorem~11.4, under
assumptions~(i)--(iv), there exist $\varepsilon^*>0$ and $c>0$
such that for all $\varepsilon<\varepsilon^*$:
\begin{enumerate}[label=(\alph*)]
  \item The full system~\eqref{eq:app_theta}--\eqref{eq:app_P} has an
  equilibrium $(\theta^*,P^*)$ with $P^*\in\mathcal{A}$.
  \item This equilibrium is locally exponentially stable with decay
  rate at least $c\min(\mu_P,\, \mu_P/L\cdot\varepsilon^{-1})$.
  \item The solution satisfies
  $\|(\theta(t),P(t))-(\theta^*,P^*)\|\leq
  M e^{-c t}\|(\theta(0),P(0))-(\theta^*,P^*)\|$
  for some $M>0$.
\end{enumerate}

\paragraph{Why condition~(iv) is sufficient}
Condition $\varepsilon < \mu_P/L$ ensures that the encoder adapts
strictly slower than the prototype convergence rate.
Concretely, the prototype subsystem can ``absorb'' encoder perturbations
of order $O(\varepsilon)$ per slow time unit, while the slow system
sees an effectively converged prototype bank.
This time-scale separation prevents the resonance instabilities that
arise at $\varepsilon\approx 1$, empirically confirmed in
Section~\ref{sec:exp_ablation}. \hfill$\square$

\section{Notation}
\label{app:notation}

Table~\ref{tab:notation} collects the main symbols used throughout
the paper for quick reference.

\begin{table}[htbp]
\centering
\caption{Summary of notation.}
\label{tab:notation}
\footnotesize
\setlength{\tabcolsep}{3pt}
\begin{tabular}{lll}
\toprule
Symbol & Dimension & Meaning \\
\midrule
$T$          & scalar      & sequence length \\
$d$          & scalar      & input embedding dimension \\
$m$          & scalar      & prototype (latent) dimension \\
$K$          & scalar      & number of prototypes \\
$N$          & scalar      & dataset size (number of sequences) \\
$H$          & scalar      & number of heads \\
$L$          & scalar      & number of stacked DDCL layers \\
$T$          & scalar      & temperature (context determines which $T$) \\
$\varepsilon$ & scalar     & learning rate ratio $\eta_\theta/\eta_P$ \\
$\eta_\theta$ & scalar     & encoder learning rate \\
$\eta_P$      & scalar     & prototype learning rate \\
$\mathbf{z}_n$ & $\R^m$   & token embedding for token $n$ \\
$\mathbf{p}_k$ & $\R^m$   & prototype vector $k$ \\
$q_{nk}$      & scalar     & soft assignment of token $n$ to prototype $k$ \\
$\boldsymbol{\mu}_n$ & $\R^m$ & soft centroid of token $n$ \\
$\mathcal{L}_q$ & scalar   & DDCL competitive loss \\
$L_\mathrm{OLS}$ & scalar  & OLS reconstruction term \\
$V$           & scalar     & prototype variance (anti-collapse force) \\
$\mathcal{S}(P)$ & scalar  & prototype separation: $\min_{j\neq k}\|\mathbf{p}_j-\mathbf{p}_k\|^2$ \\
$H(Q)$        & scalar     & mean assignment entropy \\
$\Sigma_q$    & $\R^{K\times K}$ & aggregated soft assignment covariance \\
$\mathcal{W}$ & scalar     & free energy Lyapunov functional \\
$\mathcal{A}$ & set        & set of well-separated critical points \\
\bottomrule
\end{tabular}
\end{table}

\bibliographystyle{elsarticle-num}


\end{document}